\documentclass{article}

\PassOptionsToPackage{numbers,sort&compress}{natbib}

\usepackage[preprint]{neurips_2026}

\usepackage[T1]{fontenc}
\usepackage[utf8]{inputenc}
\usepackage{microtype}

\usepackage{amsmath,amssymb,amsthm}
\usepackage{mathtools}
\usepackage{graphicx}
\usepackage{booktabs}
\usepackage{longtable}
\usepackage{tabularx}
\usepackage{array}
\usepackage{enumitem}
\usepackage{caption}
\usepackage{authblk}
\usepackage{url}
\usepackage[colorlinks=true,linkcolor=black,citecolor=black,urlcolor=blue]{hyperref}

\newcolumntype{L}[1]{>{\raggedright\arraybackslash}p{#1}}

\theoremstyle{definition}
\newtheorem{innerdefn}{Definition}
\newtheorem{innerthm}{Theorem}
\newtheorem{innerlem}{Lemma}
\newtheorem{innerprop}{Proposition}
\newtheorem{innercor}{Corollary}
\newtheorem{innerrem}{Remark}
\newtheorem{innerassu}{Assumption}

\newenvironment{defn}[2]{\renewcommand{\theinnerdefn}{#1}\innerdefn[#2]}{\endinnerdefn}
\newenvironment{thm}[2]{\renewcommand{\theinnerthm}{#1}\innerthm[#2]}{\endinnerthm}
\newenvironment{lem}[2]{\renewcommand{\theinnerlem}{#1}\innerlem[#2]}{\endinnerlem}
\newenvironment{prop}[2]{\renewcommand{\theinnerprop}{#1}\innerprop[#2]}{\endinnerprop}
\newenvironment{cor}[2]{\renewcommand{\theinnercor}{#1}\innercor[#2]}{\endinnercor}
\newenvironment{rem}[2]{\renewcommand{\theinnerrem}{#1}\innerrem[#2]}{\endinnerrem}
\newenvironment{assu}[2]{\renewcommand{\theinnerassu}{#1}\innerassu[#2]}{\endinnerassu}

\newcommand{\Prob}{\mathcal{P}}                 
\newcommand{\Borel}{\mathcal{B}}                
\newcommand{\Shf}{\mathcal{F}}                  
\newcommand{\Ctx}{\mathcal{C}}                  
\newcommand{\Sys}{\mathfrak{S}}                 
\newcommand{\Ex}{\mathbb{E}}
\newcommand{\Reals}{\mathbb{R}}
\newcommand{\Hist}{H}                           
\newcommand{\Homeo}{\operatorname{Homeo}}
\newcommand{\Fix}{\operatorname{Fix}}
\newcommand{\Hol}{\operatorname{Hol}}
\newcommand{\TV}{\mathrm{TV}}
\newcommand{\Eeq}{E_{\mathrm{eq}}}
\newcommand{\Edesc}{E_{\mathrm{desc}}}
\newcommand{\Escale}{E_{\mathrm{scale}}}
\newcommand{\Esat}{E_{\mathrm{sat}}}
\newcommand{\Dpred}{D_{\mathrm{pred}}}

\allowdisplaybreaks

\begin{document}

\title{Space as an Interventional Invariant:
Cross-Modal Predictive Geometry for Stratified Cities
and Em-Spaced Intelligence}

\author[1]{Tao Yang\thanks{Corresponding author: \texttt{yangtao128@tsinghua.edu.cn}}}
\author[2]{Xuhui Lin}
\author[3]{Kunyao Li}
\author[3]{Haijiang Li}

\affil[1]{School of Architecture, Tsinghua University,
Beijing, China}

\affil[2]{Department of Geography, University College London,
London, United Kingdom}

\affil[3]{School of Engineering, Cardiff University,
Cardiff, United Kingdom}

\date{}

\maketitle

\begin{abstract}
Space is a foundational concept across mathematics, physics, spatial cognition, urban science, and embodied intelligence, yet these fields often treat spatial structure either as a shared geometric container or as a collection of disconnected representations. Such approaches struggle to explain how heterogeneous sensory and urban processes can jointly reveal a common spatial structure, particularly when different modalities do not share the same metric or representation. This paper addresses this gap by defining space as an interventional invariant: the minimal relational structure that preserves local compatibility and the conditional laws of future observations under admissible actions. We develop a cross-modal predictive geometry that integrates local state spaces, modality-specific observation maps, an action groupoid, and a canonical predictive-state quotient, with explicit causal conditions for identifying interventional rather than merely observational structure. The key theoretical result shows that, under joint point separation, equivariance, and interventional faithfulness, the latent space is identifiable up to the centraliser of the intervention group, thereby reducing representational ambiguity to residual coordinate freedom. The framework is further extended to stratified urban systems using sheaf-valued representations, allowing geometric, physical, mobility, social, and economic layers to coexist without being reduced to a single metric. Synthetic experiments under noise evaluate equivariance, predictive sufficiency, holonomy, restriction-map recovery, cross-scale consistency, and context saturation. The resulting framework provides a unified and falsifiable foundation for spatial cognition, urban science, embodied AI, and em-spaced intelligence.
\end{abstract}

\noindent\textbf{Keywords.} space; intervention; causal identifiability; cross-modal
prediction; predictive state; bisimulation; sheaf; holonomy; projective limit;
stratified space; urban geometry; embodied spatial intelligence; em-spaced
intelligence

\bigskip
\begin{quote}\small\itshape
Space is not identified with a coordinate container. It is the least relational
structure that preserves local compatibility and the intervention-conditioned
future laws shared by heterogeneous modes of observation.
\end{quote}

\section{Introduction}

The word \emph{space} performs several jobs at once. In mathematics it names an
object equipped with specified structure and morphisms. In physics it refers to an
empirically constrained geometry within a dynamical theory. In perception it names
the organisation by which an agent anticipates what will change when it moves.
Urban practice adds a further complication. Buildings and roads are explicit
geometric objects, whereas wind, heat, electromagnetic propagation, sound, traffic,
institutions and exchange generate less visible but equally consequential
geometries of reachability, resistance and flow. A satisfactory theory must relate
these senses without erasing their differences.

The starting observation is that sensory modalities need not resemble one another
in order to disclose a common world. A turn of the head changes retinal flow,
binaural delay and proprioceptive state according to one movement. Opening a door
modifies light, air, heat, sound and accessibility together. Closing a road leaves
the Euclidean map nearly unchanged but transforms travel-time, economic and social
reachability. The stable object is therefore not a shared signal format. It is a
family of transformations that remains jointly predictable under intervention.

This paper proposes that the primary definition of space should be relational and
interventional, while dimension should be treated as a derived complexity index.
Dimension alone cannot determine which states are adjacent, which transitions are
admissible, which local descriptions glue, or which interventions distinguish two
apparently identical situations. Conversely, a relation becomes spatial only when
it supports locality, reachability, composable change, cross-modal covariance and
stable prediction. Mere statistical association is not enough.

\paragraph{Contribution 1.} A typed definition of cross-modal predictive geometry
is given in terms of probes, local states, interventions and conditional future
laws, together with the causal assumptions under which the conditional laws are
interventional rather than merely observational.

\paragraph{Contribution 2.} The central identifiability result is proved: when the
observation family jointly separates points and the learned representation is
equivariant, the latent space is recovered not up to an arbitrary homeomorphism but
up to the centraliser of the intervention group. When the action is simply
transitive this centraliser is the group itself, so space is recovered as a torsor
and coordinates are exactly the residual gauge freedom --- no more and no less.
Removing the interventions collapses the statement to the known impossibility
results for unsupervised disentanglement.

\paragraph{Contribution 3.} Space is shown to be well defined independently of any
single observational context. The context-indexed predictive quotients form a
projective system, and space is its limit; the limit is attained at a finite stage
exactly when a sufficient context exists, which is an experimentally testable
saturation claim.

\paragraph{Contribution 4.} A stratified, fibred and sheaf-valued urban model places
physical and social geometries on a common base without assuming that they share
one distance function. A holonomy obstruction is exhibited that no graph Laplacian
with identity restriction maps can detect, which is what the sheaf formalism buys
over a multilayer network.

\paragraph{Contribution 5.} The formulae are subjected to type, limit and numerical
checks under noise, ablation and refinement, and the construction is translated into
an experimentally testable architecture for embodied and em-spaced intelligence.

\section{Three senses of space}

\subsection{Mathematical space}

A mathematical space is an object $X$ in a category $\mathbf{C}$, together with the
additional structure that makes the intended questions meaningful. A topological
space privileges continuity; a Riemannian manifold adds a metric tensor; a
metric-measure space joins distance to mass; a graph records adjacency; a sheaf
records the passage from local data to compatible global data; a Hilbert space
records linear and inner-product structure. There is no structure-free mathematical
meaning of space. Its identity is always relative to a chosen class of
structure-preserving maps \citep{maclane1992sheaves,gromov2007metric,sturm2024metric}.

\subsection{Physical space}

Physical space is not merely a manifold written in coordinates. It is a model
together with fields, laws, symmetries, measuring operations and error. In general
relativity a three-dimensional space is usually obtained from a Lorentzian spacetime
only after a choice of foliation; in continuum mechanics, the material and spatial
descriptions are related but distinct; in thermodynamics and field theory, geometry
is inferred through what instruments and bodies do. Coordinate changes are
representational. Causal and metrical invariants are empirical
\citep{oneill1983semiriemannian,pearl2009causality}.

\subsection{Perceptual space}

Perceptual space is the organisation of possible sensorimotor consequences. The
sensorimotor tradition correctly insists that the spatial content of seeing, hearing
or touching lies partly in lawful changes under movement
\citep{gibson1979ecological,oregan2001sensorimotor,terekhov2016space,laflaquiere2018discovering}.
Cognitive-map research adds that a useful spatial representation supports flexible
inference beyond immediate sensation \citep{behrens2018cognitive}. The present
proposal sharpens these claims: two embodied histories occupy the same perceptual
state precisely when every admissible future policy induces the same law of future
multimodal observations.

\begin{figure}[htbp]
\centering
\includegraphics[width=\textwidth]{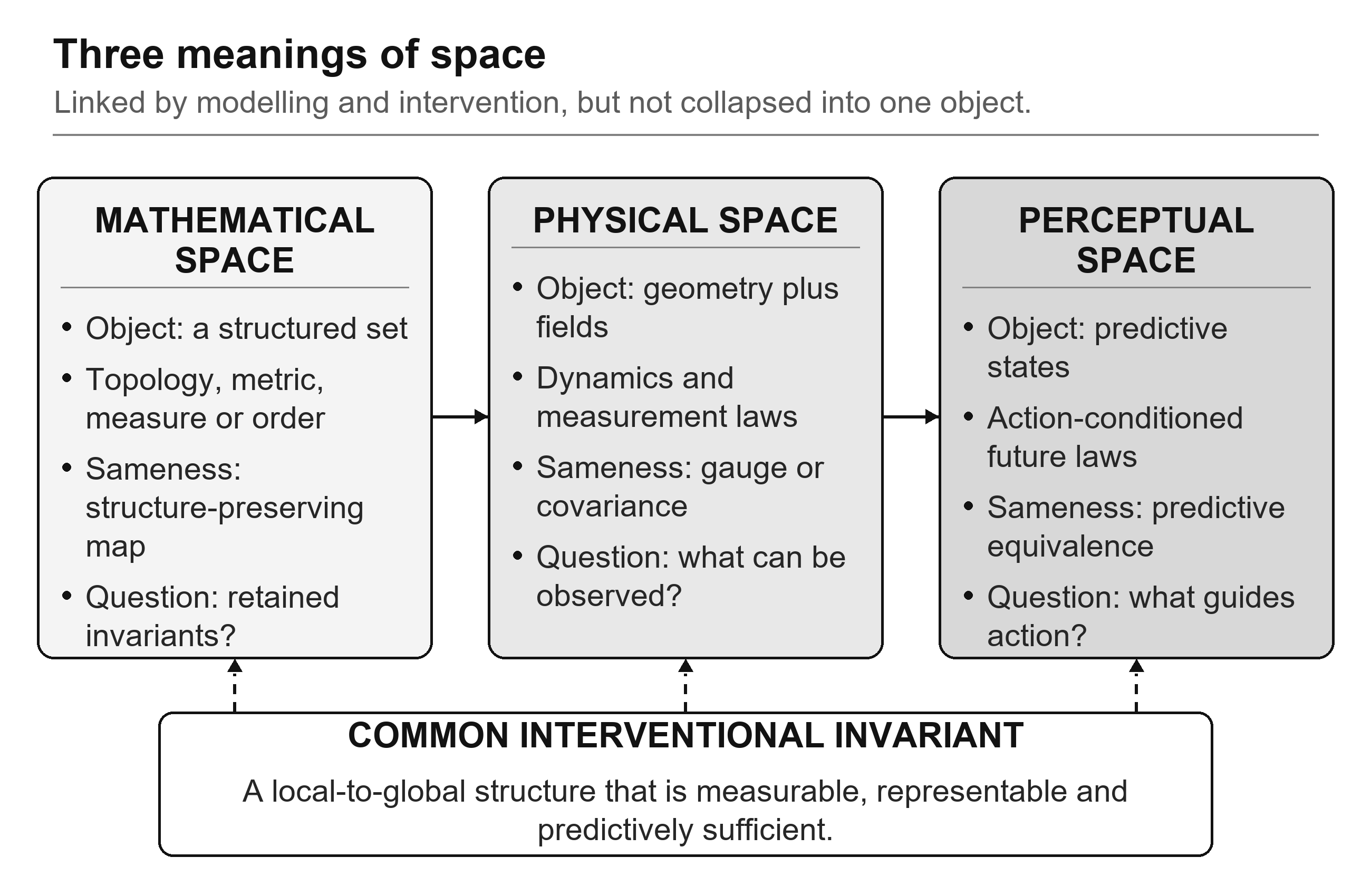}
\caption{Mathematical, physical and perceptual space are linked by modelling,
measurement and action, but their equivalence relations remain distinct.}
\label{fig:three-senses}
\end{figure}

\begin{defn}{2.1}{interventional invariant}
An interventional invariant is an equivalence class of relational models whose
observable conditional laws are unchanged under an admissible change of
representation, but vary lawfully under interventions on the represented system.
\end{defn}

\subsection{Relation to existing predictive and causal formalisms}
\label{sec:related}

The construction below inherits from four literatures, and it is worth stating
precisely what is taken and what is added. From computational mechanics comes the
\emph{causal state}: the equivalence class of pasts inducing the same conditional
distribution over futures, together with its minimality among sufficient statistics
\citep{shalizi2001computational}. From reinforcement learning come predictive state
representations \citep{littman2002predictive}, observable operator models
\citep{jaeger2000observable} and, in the controlled case, stochastic bisimulation
and model minimisation for Markov decision processes \citep{givan2003equivalence},
with their quantitative refinements as bisimulation metrics
\citep{ferns2004metrics,castro2020scalable} and their use as
representation-learning objectives \citep{zhang2021learning}. Definition~3.1 and
Theorem~3.2 are the extension of these notions to a groupoid of interventions acting
on a sheaf of typed local states; the extension is what allows locality, cross-modal
compatibility and scale to constrain the same quotient, but the minimality argument
itself is not new and is not claimed as the contribution.

From causal inference comes the distinction between conditioning and intervening
\citep{pearl2009causality}, the identification of interventional laws from logged
data under sequential ignorability and positivity \citep{robins1986new}, and the
recent programme of causal representation learning \citep{scholkopf2021toward}. The
specific results that make the present paper's central claim provable are the
identifiability theorems for latent variables under interventions
\citep{brehmer2022weakly,ahuja2023interventional,vonkugelgen2023nonparametric,lippe2022citris,squires2023linear,vonkugelgen2021self};
Theorem~3.7 is their topological counterpart, with the residual ambiguity expressed
as a centraliser rather than as a permutation-and-scaling group. From applied
topology come cellular sheaves and their Laplacians
\citep{robinson2017sheaves,hansen2019spectral,bodnar2022neural,hansen2021opinion},
the connection Laplacian and vector diffusion maps
\citep{singer2012vector,bandeira2013cheeger}, and sheaf-based sensor fusion with
non-identity restriction maps \citep{joslyn2020sheaf}.

Two further connections are worth flagging even though they are not developed here.
The successor representation and successor features
\citep{dayan1993improving,barreto2017successor} are the linear, discounted special
case of the predictive state, and the hippocampal predictive-map hypothesis
\citep{stachenfeld2017hippocampus} is the corresponding neuroscientific claim; the
cognitive-map literature \citep{behrens2018cognitive} can therefore be read as an
empirical instance of Definition~3.1 rather than as a separate tradition. Modern
self-supervised world models \citep{lecun2022path} optimise a predictive objective
of the same type, but typically without an explicit action groupoid or descent
constraint, which is precisely the structure the penalties of Section~\ref{sec:learning}
supply.

\section{Cross-modal predictive geometry}

\subsection{Typed observational system}

Let $B$ be a site of probes. A probe may be a spatial region, sensor footprint,
body-centred receptive field, cell of a complex, or finite experimental context. Let
$\Shf$ be a sheaf (or, where homotopy is essential, an $\infty$-sheaf) of local
system states on $B$. For each modality $\alpha$ in a finite index set $A$, let
$Y_\alpha$ be a standard Borel observation space and let
$h_{\alpha,U}\colon \Shf(U)\to Y_\alpha(U)$ be a measurable local observation map.
Let $G$ be a measurable action groupoid: its arrows include bodily motions,
environmental controls and institutional interventions, and composition represents
executable succession.
\begin{equation}
\Sys \;=\; \bigl(B,\ \Shf,\ G,\ \{Y_\alpha, h_\alpha\}_{\alpha\in A},\ \Phi,\ P\bigr).
\label{eq:system}
\end{equation}
Here $\Phi_g$ maps a state on the source of an action $g$ to a state on its target,
while $P$ is the family of regular conditional laws for future observations.
Equation~\eqref{eq:system} is a type declaration, not a claim that all modalities
inhabit one vector space. Their commonality lies in their response to the same
arrows of $G$.
\begin{equation}
h_\alpha\circ\Phi_g\colon \Shf(s(g))\to Y_\alpha(t(g)),
\qquad
T^g_\alpha\circ h_\alpha\colon \Shf(s(g))\to Y_\alpha(t(g)).
\label{eq:equivariance-types}
\end{equation}
The two sides of \eqref{eq:equivariance-types} are now comparable. Approximate
equivariance is measured rather than asserted by an untyped resemblance sign:
\begin{equation}
\Eeq \;=\; \Ex_{g,x}\Bigl[\ \sum_{\alpha} w_\alpha\,
d_\alpha\bigl(h_\alpha(\Phi_g x),\, T^g_\alpha(h_\alpha(x))\bigr)^2\ \Bigr].
\label{eq:eeq}
\end{equation}

\subsection{The canonical predictive state}

Let $\Hist$ be the standard Borel space of finite multimodal histories. Let $\Pi$ be
a countable policy class and $K$ a countable set of horizons. For $\pi\in\Pi$ and
$k\in K$, write $P^{\pi,k}(\cdot\mid h)$ for the regular conditional law of the next
$k$ observations under policy $\pi$. Define
\begin{equation}
S(h) \;=\; \bigl(P^{\pi,k}(\cdot\mid h)\bigr)_{\pi\in\Pi,\,k\in K}
\ \in\ \prod_{\pi,k}\Prob\bigl(Y^{1:k}\bigr),
\label{eq:predictive-state}
\end{equation}
\begin{equation}
h \sim_{\Pi,K} h' \iff P^{\pi,k}(\cdot\mid h) = P^{\pi,k}(\cdot\mid h')
\quad\text{for all }\pi\in\Pi\text{ and }k\in K.
\label{eq:predictive-equivalence}
\end{equation}

\begin{defn}{3.1}{cross-modal predictive space}
The perceptual state space induced by $(\Pi,K)$ is the image $S(\Hist)$, equivalently
the quotient $\Hist/{\sim_{\Pi,K}}$, equipped with the smallest $\sigma$-algebra
making every coordinate prediction measurable.
\end{defn}

\begin{thm}{3.2}{minimal predictive quotient}
Write $\sigma(S) = S^{-1}\bigl(\Borel(\prod_{\pi,k}\Prob(Y^{1:k}))\bigr)\subseteq\Borel(\Hist)$.
Then:
\begin{enumerate}[label=(\roman*),leftmargin=2.2em]
\item every $P^{\pi,k}(\cdot\mid\cdot)$ is $\sigma(S)$-measurable;
\item if $z\colon \Hist\to Z$ is Borel and every $P^{\pi,k}(\cdot\mid h)$ equals
$r_{\pi,k}(z(h))$ for Borel $r_{\pi,k}$, then $\sigma(S)\subseteq\sigma(z)$ and there
is a Borel $r$ with $S=r\circ z$;
\item $\sigma(S)$ is countably generated;
\item any two minimal sufficient sub-$\sigma$-algebras agree modulo the null sets of
any fixed prior on $\Hist$;
\item $S(\Hist)$ is analytic, and if $S(\Hist)$ is Borel --- which holds when
$\Pi\times K$ is finite, or when $\Hist$ is compact metrisable and $S$ is continuous
--- then $S$ descends to a Borel isomorphism of $\Hist/{\sim_{\Pi,K}}$ onto
$S(\Hist)$ by the Lusin--Souslin theorem, so the quotient is standard Borel. Without
such a hypothesis the quotient is well defined as a $\sigma$-algebra but need not be
standard Borel.
\end{enumerate}
\end{thm}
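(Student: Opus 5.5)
The proof is standard descriptive set theory applied to the countable-product map $S$. Each factor $\Prob(Y^{1:k})$ is standard Borel, being the space of probability measures on a standard Borel space with the Giry (weak) Borel structure. A countable product of standard Borel spaces is standard Borel, so $S$ takes values in a Polish space $\mathcal{S}=\prod_{\pi,k}\Prob(Y^{1:k})$.

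I first check that $S$ is Borel. A regular conditional law is a kernel, so $h\mapsto P^{\pi,k}(E\mid h)$ is Borel for each Borel $E$. Hence each coordinate $h\mapsto P^{\pi,k}(\cdot\mid h)$ is Borel into $\Prob(Y^{1:k})$, since that $\sigma$-algebra is generated by the evaluation maps. By countability of $\Pi\times K$, the map $S$ is Borel.

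Part (i) is then immediate: each $P^{\pi,k}(\cdot\mid\cdot)$ is a coordinate projection composed with $S$.

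For (iii), $\mathcal{S}$ is standard, so $\Borel(\mathcal{S})$ has a countable generator $\{E_n\}$. Then $\{S^{-1}E_n\}$ generates $\sigma(S)$, because preimage commutes with generation of $\sigma$-algebras.

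For (ii), I define $r=(r_{\pi,k})_{\pi,k}\colon Z\to\mathcal{S}$. It is Borel because its coordinates are, again using countability, and $S=r\circ z$ holds pointwise by hypothesis. Therefore $\sigma(S)=z^{-1}(r^{-1}\Borel(\mathcal{S}))\subseteq\sigma(z)$. This gives sufficiency and minimality at once: any sufficient Borel statistic refines $S$.

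For (iv), let $\mathcal{G}_1,\mathcal{G}_2$ be minimal sufficient sub-$\sigma$-algebras. Here I must fix what ``sufficient'' means for a $\sigma$-algebra. I will take it to mean that every $P^{\pi,k}(E\mid\cdot)$ agrees $\mu$-a.e.\ with a $\mathcal{G}$-measurable function, for a fixed prior $\mu$.

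Given this, $\sigma(S)$ is contained in the $\mu$-completion of any sufficient $\mathcal{G}$. The reason is that $\sigma(S)$ is generated by the countably many functions $P^{\pi,k}(E_n\mid\cdot)$, where $E_n$ runs over a countable $\pi$-system generating $\Borel(Y^{1:k})$. Each of these is a.e.\ equal to a $\mathcal{G}$-measurable function, and a countable union of null sets is null.

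Conversely, $\sigma(S)$ is itself sufficient, so minimality of each $\mathcal{G}_i$ forces $\mathcal{G}_i\subseteq\sigma(S)$ modulo $\mu$-null sets. Hence $\mathcal{G}_1=\mathcal{G}_2=\sigma(S)$ mod $\mu$. The main obstacle in the whole proof is exactly this step: (iv) needs a clean definition of minimal sufficiency and an a.e.\ version of the argument in (ii). The exact version of (ii) is not enough, because a.e.\ equality must be propagated across countably many sets. The countable generation from (iii) is what makes this work.

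For (v), $S(\Hist)$ is the Borel image of a standard Borel space, hence analytic. For the two sufficient conditions:

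\begin{itemize}
\item If $\Hist$ is compact metrisable and $S$ is continuous into the metrisable space $\mathcal{S}$, then $S(\Hist)$ is compact, hence closed, hence Borel.
\item For finite $\Pi\times K$, Borelness of the image needs extra input. I would invoke it through a Borel selector argument, or else state the claim under the hypothesis that $S(\Hist)$ is Borel.
\end{itemize}

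Given $S(\Hist)$ Borel, the induced map $\bar S\colon\Hist/{\sim_{\Pi,K}}\to S(\Hist)$ is a bijection by the definition of $\sim_{\Pi,K}$. Giving the quotient the $\sigma$-algebra pushed forward from $\sigma(S)$, the map $\bar S$ is a measurable isomorphism onto $S(\Hist)$ with its trace Borel structure, by construction of $\sigma(S)$. Since $S(\Hist)$ is a Borel subset of a Polish space, it is standard. Lusin--Souslin, applied to injective Borel maps on standard spaces, then identifies this with the Borel structure, so the quotient is standard Borel.

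Without Borelness, the quotient $\sigma$-algebra still exists, but it is only an analytic measurable space.
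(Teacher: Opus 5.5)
Your argument is correct and follows the paper's item-by-item skeleton for (i)--(iii): coordinate projections, assembling the $r_{\pi,k}$ into one product map, and a countable generator pulled back through $S$ (a step the paper leaves implicit). You also verify that $S$ is Borel, which the paper assumes. You depart from the paper in (iv) and (v), and in both places your version is the more careful one.

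In (iv) the paper simply cites (ii), which is a pointwise statement. You instead fix an a.e.\ notion of sufficiency and prove directly that $\sigma(S)\subseteq\mathcal{G}$ modulo $\mu$-null sets. That step is easier than you suggest. Each $P^{\pi,k}(E\mid\cdot)$ is individually measurable for the $\sigma$-algebra of Borel sets that differ from a $\mathcal{G}$-set by a $\mu$-null set. So no countable union of null sets, and no appeal to (iii), is needed.

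In (v) the paper invokes Lusin--Souslin, which as stated needs a standard Borel domain; standardness of the quotient is exactly what is being proved. You instead observe that $\bar S$ is an isomorphism onto $S(\Hist)$ with its trace structure by construction of $\sigma(S)$. That is precisely the $\sigma$-algebra Definition~3.1 places on the quotient, so your route avoids the circularity. Your closing appeal to Lusin--Souslin is then superfluous. If you want the quotient structure of all saturated Borel sets instead, the right tool is Lusin separation. For saturated Borel $C$, the sets $S(C)$ and $S(\Hist\setminus C)$ are disjoint analytic sets, and a separating Borel $D$ gives $C=S^{-1}(D)$. This needs no Borelness of the image.

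On the finite-$\Pi\times K$ clause you are right to hesitate, but drop the selector idea: the clause is false as stated, and the paper's proof offers no argument for it. Take one policy, one horizon and $Y$ uncountable. Pick an analytic non-Borel $A\subseteq\Prob(Y)$ and a Borel surjection $f\colon Y\to A$. Consider a process whose second observation, given the first $y_1$, has law $f(y_1)$, and which is i.i.d.\ with a fixed law thereafter. Then $S(\Hist)$ is $A$ up to finitely many points, hence not Borel.

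Uniformisation results that yield Borel images (Lusin--Novikov, Arsenin--Kunugui) need countable or $\sigma$-compact fibres, and the fibres of $S$ need not have either property. Your fallback is the correct treatment: state (v) under the hypothesis that $S(\Hist)$ is Borel, as Remark~3.2a effectively does. Your compact-and-continuous case is fine.
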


\begin{proof}
(i) Each required law is a coordinate projection of $S$, hence
$\sigma(S)$-measurable. (ii) If $z$ is sufficient, write
$P^{\pi,k}(\cdot\mid h)=r_{\pi,k}(z(h))$; countability of $\Pi\times K$ allows the
coordinate maps to be assembled into one Borel product map $r$ with $S=r\circ z$, and
$\sigma(S)=\sigma(r\circ z)\subseteq\sigma(z)$. (iii) Each $\Prob(Y^{1:k})$ is
standard Borel, hence countably generated, and a countable product of countably
generated $\sigma$-algebras is countably generated. (iv) Two minimal sufficient
$\sigma$-algebras are each contained in the other modulo null sets by (ii), and
mutual almost-sure inclusion is almost-sure equality. (v) $S$ is Borel from a
standard Borel space, so its image is analytic. If the image is Borel and $S$ is
injective on the quotient --- which it is by construction --- then Lusin--Souslin
gives that the induced map is a Borel isomorphism onto its image.
\end{proof}

\begin{rem}{3.2a}{}
The reason for stating (v) carefully is that the image of a Borel map need not be
Borel, so the frequently made claim that a predictive quotient is ``a standard Borel
space'' is not automatic. What is always available is the $\sigma$-algebra
$\sigma(S)$, and every statement in this paper that treats the quotient as a space
should be read as carrying the hypothesis of~(v). Statement~(iv) is the sense in
which the minimal sufficient representation is unique: uniqueness holds modulo null
sets, not pointwise.
\end{rem}

\begin{rem}{3.2b}{what is and is not new}
For an uncontrolled process and a single trivial policy, Theorem~3.2 reduces to the
minimality of causal states in computational mechanics
\citep{shalizi2001computational}; for a finite Markov decision process it reduces to
the minimality of the stochastic bisimulation quotient
\citep{givan2003equivalence,ferns2004metrics}. The contribution here is not the
minimality argument, which is standard, but the setting: $S$ is defined over a
groupoid of interventions acting on a sheaf of typed local states, so that the same
quotient is simultaneously constrained by prediction, by cross-modal equivariance
\eqref{eq:eeq}, by local-to-global descent \eqref{eq:descent-energy} and by
cross-scale commutation \eqref{eq:escale}. A categorical treatment of sufficiency
in the same spirit, though without the interventional layer, is given by
\citet{fritz2020synthetic}. Section~\ref{sec:identifiability} shows
what this buys.
\end{rem}

\begin{figure}[htbp]
\centering
\includegraphics[width=\textwidth]{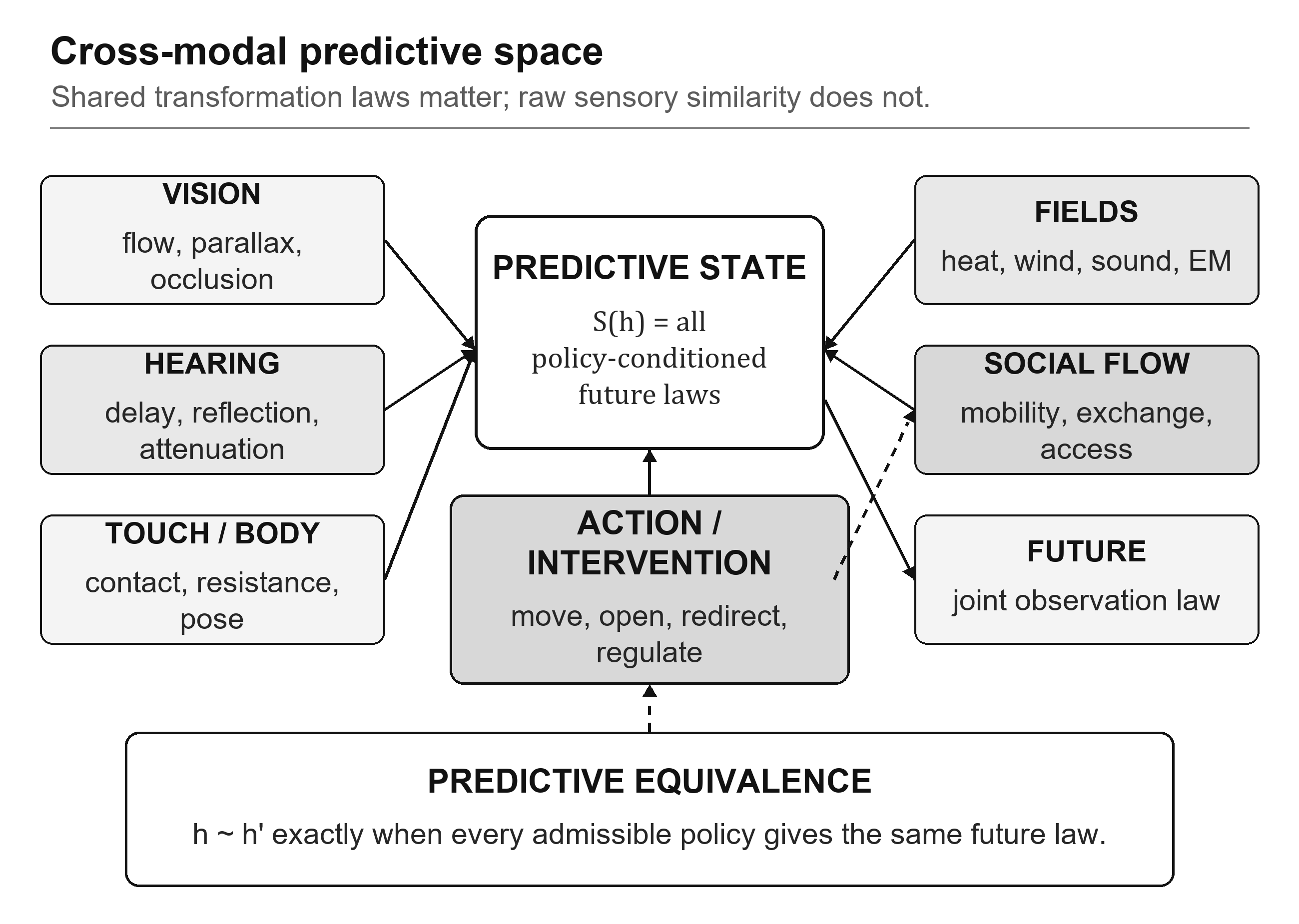}
\caption{The common space is the minimal action-conditioned predictive state
inferred from heterogeneous local observations.}
\label{fig:predictive-state}
\end{figure}

\subsection{Interventional semantics, identifiability and the status of coordinates}
\label{sec:identifiability}

Nothing in Section~3.2 is causal so far. The family $P$ was introduced as a family of
regular conditional laws, and a conditional law computed from logged data is not in
general the law that would obtain under an executed intervention. Since the
definition of space proposed here is interventional, this gap must be closed
explicitly rather than absorbed into the notation.

\subsubsection{When the predictive state is interventional}

\begin{assu}{I1}{interventional reading}
For $\pi\in\Pi$ and a history $h$, $P^{\pi,k}(\cdot\mid h)$ denotes the law of the
next $k$ multimodal observations under $\mathrm{do}(\pi)$ --- that is, under the
arrows of $G$ that $\pi$ selects, applied to the state reached by $h$ --- and not the
conditional law of observations in data generated by some other policy.
\end{assu}

\begin{assu}{I2}{sequential ignorability}
The behaviour policy $b$ that generated the data selects each arrow $g_t$ as a
measurable function of the observed history $h_t$ together with exogenous randomness
that is independent of the latent state given $h_t$. Equivalently, no unobserved
variable simultaneously drives the choice of intervention and the subsequent
observations.
\end{assu}

\begin{assu}{I3}{positivity, or interventional excitation}
There is $\varepsilon>0$ such that $b(g\mid h)\ge\varepsilon$ for every admissible
arrow $g$ at every history $h$ in the support of the data.
\end{assu}

\begin{lem}{3.5}{identification by g-computation}
Under I2 and I3, for every $\pi\in\Pi$ and $k\in K$ the interventional law
$P^{\pi,k}(\cdot\mid h)$ is identified from the observational distribution by the
g-formula
\begin{equation}
P^{\pi,k}(y_{1:k}\mid h)
=\int \prod_{j=1}^{k}
p\bigl(y_j \mid h,\, g_{1:j},\, y_{1:j-1}\bigr)\,
\pi\bigl(g_j \mid h,\, y_{1:j-1}\bigr)\, \mathrm{d}g_{1:k}.
\label{eq:gformula}
\end{equation}
Consequently the canonical predictive state $S$ of \eqref{eq:predictive-state} is
estimable from logged data. If I3 fails on a subfamily of arrows, then $S$ is
identified only for the policies supported by $b$, and Theorem~3.2 returns the
minimal sufficient statistic for that subfamily, which is a strictly coarser space.
\end{lem}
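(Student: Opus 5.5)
The plan is the standard sequential g-computation argument of \citet{robins1986new}, adapted to the arrows of $G$. I will define the interventional law by a latent-state model and show that each of its factors equals an observational conditional. Let $x_t$ be the latent state reached after history $h_t$. Under Assumption~I1, the law of $(g_{1:k},y_{1:k})$ under $\mathrm{do}(\pi)$ from $h$ is generated by three ingredients: arrows drawn as $g_j\sim\pi(\cdot\mid h,y_{1:j-1})$, the transition $x\mapsto\Phi_{g_j}x$ (possibly with exogenous noise), and observations $y_j$ drawn through the maps $h_\alpha$. Marginalising the latent states and the arrows gives a product of kernels of the form
\begin{equation*}
q\bigl(y_j\mid h,g_{1:j},y_{1:j-1}\bigr)\,\pi\bigl(g_j\mid h,y_{1:j-1}\bigr),
\end{equation*}
where $q$ is the \emph{interventional} predictive kernel obtained by integrating the posterior over the latent state given the past. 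It therefore suffices to show, by induction on $j$, that $q(\cdot\mid h,g_{1:j},y_{1:j-1}) = p(\cdot\mid h,g_{1:j},y_{1:j-1})$, where $p$ is the observational conditional under $b$. This identity must hold for almost every conditioning value, with respect to the law induced by $\pi$.

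The inductive step uses Assumption~I2. Since $g_j$ is chosen by $b$ as a function of the observed history and of exogenous randomness independent of $x_j$ given that history, we have $x_j \perp g_j \mid (h,g_{1:j-1},y_{1:j-1})$. So the posterior of the latent state given the past is the same whether the arrow was selected by $b$ or forced by $\pi$. The transition and emission kernels are mechanisms that do not depend on how $g_j$ was chosen. Combining these, the conditional of $y_j$ given the extended history is policy-invariant, which is exactly the needed equality. One then applies the chain rule for regular conditional distributions on standard Borel spaces to multiply the $k$ factors into \eqref{eq:gformula}, and integrates over $g_{1:k}$.

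The main obstacle is well-definedness. The observational conditional $p$ is only defined $P^b$-almost everywhere, while the formula evaluates it on histories weighted by $\pi$. Assumption~I3 closes this gap. With $b(g\mid h)\ge\varepsilon$, the $\pi$-induced law on $(h,g_{1:j},y_{1:j-1})$ is absolutely continuous with respect to the $b$-induced law, with Radon--Nikodym derivative bounded by $\varepsilon^{-j}$. By induction, any version of $p$ then gives the same integral. For a continuous groupoid I will read ``$b(g\mid h)\ge\varepsilon$'' as a density bound relative to a reference measure on the arrows, and I will state this reading explicitly.

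Estimability of $S$ follows coordinatewise. Each $P^{\pi,k}(\cdot\mid h)$ is a functional of the observational law, and $\Pi\times K$ is countable, so the product map $S$ of \eqref{eq:predictive-state} is identified. For the final clause, suppose I3 fails on a subfamily of arrows. Then absolute continuity holds exactly for the policies $\Pi_b$ supported by $b$, so only $(P^{\pi,k})_{\pi\in\Pi_b}$ is identified. Applying Theorem~3.2(ii) to the index set $\Pi_b\times K$ gives the minimal sufficient statistic $S_b$. Since $S_b$ is a coordinate projection of $S$, we get $\sigma(S_b)\subseteq\sigma(S)$, so $S_b$ is coarser. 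I would note that the inclusion is \emph{strict} only when some unsupported policy separates histories that $\Pi_b$ does not. The lemma's word ``strictly'' should be read under that proviso, or else justified by a separation assumption.
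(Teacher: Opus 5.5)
Your proposal is correct and follows the paper's approach: the paper's proof is a one-line appeal to Robins's sequential g-computation identity, and you have written that identity out. In your version the behaviour-policy factors drop out of the latent posterior under I2, and positivity gives the absolute continuity that makes the formula independent of the version of $p$. Your caveat that ``strictly coarser'' needs a separation hypothesis is also a fair correction, and the paper's citation does not address it.

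The one slip is the $\varepsilon^{-j}$ bound on the Radon--Nikodym derivative, which holds only for discrete arrows. Under your density reading of I3 the per-step ratio is $\pi(g_i\mid\cdot)/b(g_i\mid\cdot)\le\pi(g_i\mid\cdot)/\varepsilon$. You need $\pi$ itself to have a density to get absolute continuity, and a bounded ratio would further need a bound on that density. Absolute continuity is all the identification step uses, so the argument stands.
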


\begin{proof}
Immediate from the sequential g-computation identity for time-varying treatments
under sequential ignorability and positivity \citep[\S6]{robins1986new}; the groupoid
structure supplies the composition of arrows but plays no role in the identification
argument.
\end{proof}

\begin{rem}{3.5a}{}
Lemma~3.5 is where the philosophical claim of this paper acquires an operational
edge. Positivity is not a technical convenience. An intervention that was never
executed contributes nothing to the geometry, and a state distinction that only such
an intervention could reveal is simply not part of the space that the data define.
The frequently voiced intuition that a city ``has'' a geometry which sensing merely
uncovers is, on this account, an assertion that the relevant interventions have
positive probability under the observation regime --- an empirical claim, and a
checkable one.
\end{rem}

\subsubsection{Recovery of the latent topology}

At any fixed probe $U$, collect the observation maps into
$H=(h_1,\dots,h_m)\colon X\to\prod_\alpha Y_\alpha$. A family of modalities
\emph{jointly separates points} when, for every $x\neq x'$, some $\alpha$ satisfies
$h_\alpha(x)\neq h_\alpha(x')$. No individual modality need be injective.

\begin{lem}{3.6}{joint topological embedding}
If $X$ is compact Hausdorff, each $Y_\alpha$ is Hausdorff, every $h_\alpha$ is
continuous and the family $\{h_\alpha\}$ jointly separates points, then $H$ is a
topological embedding of $X$ into $\prod_\alpha Y_\alpha$. If moreover every
$h_\alpha$ is equivariant with respect to $\Phi_g$ and $T^g_\alpha$, then the latent
action is conjugate on $H(X)$ to the product observation action
$\prod_\alpha T^g_\alpha$.
\end{lem}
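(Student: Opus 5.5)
The plan is to prove the first claim with the standard compact-to-Hausdorff argument, and then read off the conjugacy directly from the equivariance identities.

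\emph{Step 1: $H$ is continuous and injective.} The product $\prod_\alpha Y_\alpha$ carries the product topology, so $H$ is continuous because each coordinate $h_\alpha = \mathrm{pr}_\alpha\circ H$ is continuous. Injectivity of $H$ is exactly joint point separation: if $x\neq x'$, some $\alpha$ has $h_\alpha(x)\neq h_\alpha(x')$, hence $H(x)\neq H(x')$. The product of finitely many Hausdorff spaces is Hausdorff, so $\prod_\alpha Y_\alpha$ is Hausdorff.

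\emph{Step 2: $H$ is a closed embedding.} Let $C\subseteq X$ be closed. Then $C$ is compact, so $H(C)$ is compact. A compact subset of a Hausdorff space is closed, so $H(C)$ is closed in the product, and therefore closed in $H(X)$. Thus the continuous bijection $H\colon X\to H(X)$ is a closed map, hence a homeomorphism onto its image. This is all that ``topological embedding'' requires; in fact $H$ is a closed embedding.

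\emph{Step 3: conjugacy of the actions.} For the second claim, fix an arrow $g$. Equivariance of each modality gives $h_\alpha\circ\Phi_g = T^g_\alpha\circ h_\alpha$, which is the exact version of $\Eeq=0$ in \eqref{eq:eeq}. Taking components together yields
\[
H\circ\Phi_g=\Bigl(\prod_\alpha T^g_\alpha\Bigr)\circ H .
\]
It follows that $\prod_\alpha T^g_\alpha$ maps $H(X_{s(g)})$ into $H(X_{t(g)})$. Since $H$ is a homeomorphism onto its image, on the image we have $\prod_\alpha T^g_\alpha\big|_{H(X)} = H\circ\Phi_g\circ H^{-1}$, which is the asserted conjugacy. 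The composition law transports along $H$ automatically: $H\Phi_{g'}\Phi_gH^{-1}=(H\Phi_{g'}H^{-1})(H\Phi_gH^{-1})$. So the restricted product action is a groupoid action isomorphic to $\Phi$.

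\emph{Expected obstacle and typing.} There is no genuine difficulty. The only subtle point is typing across probes: $\Phi_g$ goes from $\Shf(s(g))$ to $\Shf(t(g))$. I will therefore apply Steps 1--2 at each probe, with $X=\Shf(U)$ assumed compact Hausdorff, and read the conjugacy arrowwise between the two embeddings $H_{s(g)}$ and $H_{t(g)}$. I will also remark that the $T^g_\alpha$ need not be homeomorphisms of the whole of $Y_\alpha$; conjugacy is claimed only on $H(X)$.
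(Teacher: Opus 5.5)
Your proposal is correct and matches the paper's proof: injectivity from joint point separation, then the compact-to-Hausdorff homeomorphism criterion (which you prove via the closed-map argument, where the paper simply cites it), then conjugacy read off from $H\circ\Phi_g=(\prod_\alpha T^g_\alpha)\circ H$ restricted to $H(X)$. Your arrowwise typing between $H_{s(g)}$ and $H_{t(g)}$ is a harmless refinement of the paper's fixed-probe statement.
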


\begin{proof}
Joint point separation makes $H$ injective. A continuous injection from a compact
space into a Hausdorff space is a homeomorphism onto its image. Equivariance gives
$H\circ\Phi_g=(\prod_\alpha T^g_\alpha)\circ H$, which is conjugacy after restricting
the product action to $H(X)$.
\end{proof}

This is a statement about the true observation maps, not about a learned encoder,
and by itself it settles nothing statistically. The next subsection supplies what is
actually needed.

\subsubsection{What interventions buy: identifiability up to the centraliser}

Assume throughout:
\begin{description}[leftmargin=3.2em,style=nextline,font=\normalfont\bfseries]
\item[(E1)] $X$ is a compact connected Hausdorff space and
$\Phi\colon G\to\Homeo(X)$ is an action of a topological group $G$ by
homeomorphisms;
\item[(E2)] each $h_\alpha$ is continuous and the family jointly separates points;
\item[(E3)] equivariance holds, $h_\alpha\circ\Phi_g = T^g_\alpha\circ h_\alpha$;
\item[(E4)] \emph{interventional faithfulness}: for every $x\neq x'$ there exist
$\pi\in\Pi$ and $k\in K$ with
$P^{\pi,k}(\cdot\mid x)\neq P^{\pi,k}(\cdot\mid x')$.
\end{description}

\begin{thm}{3.7}{identifiability up to the centraliser of the intervention group}
Let $z\colon\prod_\alpha Y_\alpha\to Z$ be continuous, predictively sufficient in the
sense that every $P^{\pi,k}$ factors through $z$, and equivariant for an action
$\Psi$ of $G$ on $Z$, that is
$z\circ(\prod_\alpha T^g_\alpha)=\Psi_g\circ z$. Then under E1--E4 the composite
$\varphi := z\circ H \colon X\to Z$ is a homeomorphism onto its image and satisfies
$\varphi\circ\Phi_g = \Psi_g\circ\varphi$ for every $g$. If $\varphi'$ is any second
map with the same properties, then $\tau := \varphi'^{-1}\circ\varphi$ is a
homeomorphism of $X$ commuting with every $\Phi_g$, that is
\begin{equation}
\tau \ \in\ Z_{\Homeo(X)}\bigl(\Phi(G)\bigr),
\label{eq:centraliser}
\end{equation}
the centraliser of the action. The latent space is therefore determined up to this
centraliser, and not merely up to an arbitrary homeomorphism.
\end{thm}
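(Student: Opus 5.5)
The plan is to treat the theorem in two stages: first show that $\varphi=z\circ H$ is an equivariant embedding, then compare two such maps.

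\textbf{Stage 1: $\varphi$ is injective.} By Lemma~3.6, under E1--E3 the map $H$ is a topological embedding of $X$ into $\prod_\alpha Y_\alpha$, and it intertwines $\Phi_g$ with $\prod_\alpha T^g_\alpha$. Injectivity of $z$ on $H(X)$ is where E4 enters. Suppose $z(H(x))=z(H(x'))$ with $x\neq x'$. Predictive sufficiency means each $P^{\pi,k}(\cdot\mid x)$ can be written as $r_{\pi,k}(z(H(x)))$; this is the reading of ``factors through $z$'', with laws indexed by latent states via the identification of histories with states in Definition~3.1. Then $P^{\pi,k}(\cdot\mid x)=P^{\pi,k}(\cdot\mid x')$ for every $\pi,k$, contradicting E4. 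So $\varphi$ is injective.

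\textbf{Stage 1: $\varphi$ is an equivariant embedding.} $\varphi$ is continuous as a composite of continuous maps. If $Z$ is Hausdorff, which I will state as a standing convention, then $\varphi$ is a continuous injection from the compact space $X$, so it is a homeomorphism onto its image. This is the same argument used in Lemma~3.6. Equivariance is a direct chain:
\[
\varphi\circ\Phi_g = z\circ H\circ\Phi_g = z\circ\Bigl(\prod_\alpha T^g_\alpha\Bigr)\circ H = \Psi_g\circ z\circ H = \Psi_g\circ\varphi .
\]

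\textbf{Stage 2: comparing two maps.} Let $\varphi'$ be another map with the same properties, landing in the same $Z$ with the same action $\Psi$. The hypothesis needed is $\varphi(X)=\varphi'(X)$. This is the main obstacle, because without a common image $\varphi'^{-1}\circ\varphi$ is not defined on all of $X$.

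I would first try to derive equality of images from predictive sufficiency. Both images are determined by the predictive state: by Theorem~3.2(ii), both encoders factor the minimal statistic $S$, and $S$ is injective on $X$ by E4. Concretely, for $x\in X$ I would use $S$ to select the point $x'$ with $\varphi'(x')=\varphi(x)$, namely the unique point whose predictive law matches. If this derivation fails, I will make the common-image condition explicit as part of ``the same properties''. In that case $\varphi$ and $\varphi'$ are two parametrisations of the same subset of $Z$, i.e.\ of the same learned latent space.

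\textbf{Stage 2: $\tau$ lies in the centraliser.} With images equal, $\tau=\varphi'^{-1}\circ\varphi$ is a composite of homeomorphisms $X\to\varphi(X)\to X$, hence $\tau\in\Homeo(X)$. Using $\varphi'^{-1}\circ\Psi_g=\Phi_g\circ\varphi'^{-1}$ on the image, which follows from the equivariance of $\varphi'$, we get
\[
\tau\circ\Phi_g=\varphi'^{-1}\circ\Psi_g\circ\varphi=\Phi_g\circ\varphi'^{-1}\circ\varphi=\Phi_g\circ\tau .
\]
This places $\tau$ in $Z_{\Homeo(X)}(\Phi(G))$, as required in \eqref{eq:centraliser}.

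Connectedness of $X$ from E1 is not needed for this argument. I would flag that it matters only for the later specialisation to simply transitive actions, where it pins the centraliser down to the group itself.
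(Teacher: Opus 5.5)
Your proof follows the paper's proof step for step: injectivity from sufficiency plus E4, the compact-to-Hausdorff embedding argument, the same equivariance chain, and the centraliser computation. The paper also quietly needs $Z$ Hausdorff and $\varphi,\varphi'$ to be homeomorphisms \emph{onto the same image} (its proof says exactly this), so making that hypothesis explicit is the right choice.

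Drop the attempted derivation of $\varphi(X)=\varphi'(X)$ from predictive sufficiency. It cannot work, because a point of $Z$ carries no predictive law independent of the decoder: take $Z=X\times\{0,1\}$ with $\Psi_g=\Phi_g\times\mathrm{id}$ and $\varphi,\varphi'$ landing in different sheets. Also, connectedness plays no role in Corollary~3.8 either, since a map commuting with a transitive action is already determined by its value at one point.
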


\begin{proof}
By E1--E2 and Lemma~3.6, $H$ is an embedding. Suppose $z(H(x))=z(H(x'))$. Since every
$P^{\pi,k}$ factors through $z$, we get $P^{\pi,k}(\cdot\mid x)=P^{\pi,k}(\cdot\mid x')$
for all $\pi$ and $k$, so $x=x'$ by E4; hence $\varphi=z\circ H$ is injective. It is
continuous as a composite of continuous maps, and a continuous injection from a
compact space into a Hausdorff space is a homeomorphism onto its image. Equivariance
follows by composition:
\begin{equation}
\varphi\circ\Phi_g
= z\circ H\circ\Phi_g
= z\circ\Bigl(\prod_\alpha T^g_\alpha\Bigr)\circ H
= \Psi_g\circ z\circ H
= \Psi_g\circ\varphi ,
\label{eq:equivariance-chase}
\end{equation}
using E3 and the equivariance of $z$. Finally, if $\varphi$ and $\varphi'$ are both
$G$-equivariant homeomorphisms onto the same image, then
$\tau=\varphi'^{-1}\circ\varphi$ satisfies
$\tau\circ\Phi_g=\varphi'^{-1}\circ\Psi_g\circ\varphi
=\Phi_g\circ\varphi'^{-1}\circ\varphi=\Phi_g\circ\tau$,
so $\tau$ lies in the centraliser.
\end{proof}

\begin{cor}{3.8}{space as a torsor; the exact status of coordinates}
If the action $\Phi$ is simply transitive, so that $X$ is a principal homogeneous
$G$-space, then $Z_{\Homeo(X)}(\Phi(G))\cong G$ acting by right translations.
Consequently $X$ is recovered as a $G$-torsor: the observation family together with
the intervention family determines everything about the space except the choice of an
origin and a frame, and that choice is exactly the residual ambiguity --- no more and
no less.
\end{cor}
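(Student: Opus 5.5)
The plan is to fix a base point $x_0\in X$ and use simple transitivity to identify $X$ with $G$ through the orbit map $o\colon G\to X$, $o(g)=\Phi_g(x_0)$. Under this identification the action $\Phi$ becomes left translation, $o(hg)=\Phi_h(o(g))$. I will assume $o$ is a homeomorphism, which holds for instance when $G$ is compact, or locally compact $\sigma$-compact so that an open-mapping theorem for homogeneous spaces applies. I will flag this as a hypothesis implicit in ``principal homogeneous $G$-space'', since continuity of $o^{-1}$ is not automatic from simple transitivity alone. The centraliser $Z_{\Homeo(X)}(\Phi(G))$ is thereby transported to the set of homeomorphisms $\sigma$ of $G$ with $\sigma(hg)=h\,\sigma(g)$ for all $h,g\in G$.

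The key step is to show that these commuting maps are exactly right translations. Given such a $\sigma$, put $a:=\sigma(e)$. Then $\sigma(g)=\sigma(g\cdot e)=g\,\sigma(e)=ga$, so $\sigma=R_a$. Conversely, each $R_a\colon g\mapsto ga$ is a homeomorphism commuting with every left translation, by associativity. Hence the map $a\mapsto R_a$ is a bijection from $G$ onto the centraliser. It satisfies $R_a\circ R_b=R_{ba}$, so it is an anti-isomorphism. Composing with inversion, $a\mapsto R_{a^{-1}}$, gives a group isomorphism $G\cong Z_{\Homeo(X)}(\Phi(G))$. This is the precise sense of ``acting by right translations''.

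Back on $X$, the centraliser element corresponding to $a$ is $\tau_a(\Phi_g x_0)=\Phi_{ga}x_0$. Equivalently, $\tau_a$ is the map that moves the origin from $x_0$ to $\Phi_a x_0$ while keeping the action fixed. Combining this with Theorem~3.7, any two equivariant, predictively sufficient encodings $\varphi,\varphi'$ differ by $\tau=\varphi'^{-1}\circ\varphi=\tau_a$ for a unique $a\in G$. So the recovered object is $X$ with its free transitive $G$-action, that is, a $G$-torsor, determined up to the choice of base point $\varphi^{-1}$-preimage, i.e. an ``origin and frame'' $a$.

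For the ``no more and no less'' claim I will argue both directions. ``No more'' is Theorem~3.7 together with the computation above. For ``no less'', given $\varphi$ and any $a$, the composite $\varphi\circ\tau_a^{-1}$ again satisfies all hypotheses of Theorem~3.7: it is a homeomorphism onto the same image, equivariance follows because $\tau_a$ commutes with $\Phi$, and sufficiency holds because it is realised by the same $z$ up to relabelling of $X$. Hence every element of the centraliser actually occurs. The main obstacle I expect is the topological one, namely making the orbit map a homeomorphism and verifying that $\tau_a$ is continuous, which is equivalent to continuity of right multiplication transported through $o$. I would handle this by stating the needed regularity on $G$ explicitly (compact, or locally compact and $\sigma$-compact, acting on the compact space $X$ of E1) rather than leaving it implicit. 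The algebraic part is routine.
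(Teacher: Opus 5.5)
Your proposal follows the paper's proof: fix $x_0$, transport $\Phi$ to left translation through the orbit map, read off $\tau(\Phi_g x_0)=\Phi_g\tau(x_0)=\Phi_{ga}x_0$ for a commuting $\tau$, and note conversely that right translations commute with the left action. Your added care is justified and tightens the paper: the claim that the orbit map is a homeomorphism ``because the action is simply transitive'' does not follow as stated (the circle group with the discrete topology rotating $S^1$ is a counterexample), and $a\mapsto R_a$ is an anti-isomorphism, so your inversion is needed; the one loose phrase is in your ``no less'' step, where $\varphi\circ\tau_a^{-1}$ is not realised by the same $z$, so rely instead on the uniqueness clause of Theorem~3.7 in the form its proof uses (equivariant homeomorphisms onto a common image), which $\varphi\circ\tau_a^{-1}$ plainly satisfies.
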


\begin{proof}
Fix $x_0$ and identify $X$ with $G$ by $g\mapsto\Phi_g x_0$; this is a homeomorphism
because the action is simply transitive. If $\tau$ commutes with every $\Phi_g$ and
$\tau(x_0)=\Phi_a x_0$, then $\tau(\Phi_g x_0)=\Phi_g\tau(x_0)=\Phi_{ga}x_0$, so
$\tau$ is right translation by $a$. Conversely every right translation commutes with
the left action.
\end{proof}

Corollary~3.8 is the precise form of the claim, made informally in Section~2.2, that
coordinate changes are representational while causal and metrical invariants are
empirical. A coordinate system is a section of the torsor. Two observers who disagree
about coordinates disagree by an element of $G$ and about nothing else, and this is a
theorem about the observational system rather than a stipulation.

\begin{prop}{3.9}{quantitative version}
Let $(X,d)$ be a compact metric space and define the \emph{interventional separation
modulus}
\begin{equation}
\eta(\tau) \;:=\; \inf\Bigl\{\,
\sup_{\pi,k}\ \TV\bigl(P^{\pi,k}(\cdot\mid x),\,P^{\pi,k}(\cdot\mid x')\bigr)
\ :\ d(x,x')\ge\tau \,\Bigr\}.
\label{eq:separation-modulus}
\end{equation}
Suppose an encoder $z$ and decoder family $Q$ satisfy
$\sup_{\pi,k}\TV\bigl(Q^{\pi,k}(\cdot\mid z(H(x))),\,P^{\pi,k}(\cdot\mid x)\bigr)\le\epsilon$
for every $x$. Then every fibre of $z\circ H$ has $d$-diameter at most
$\tau_\epsilon := \inf\{\tau>0:\eta(\tau)>2\epsilon\}$. Exact recovery is the limiting
case $\epsilon\to0$ with $\eta(\tau)>0$ for every $\tau>0$, which is E4.
\end{prop}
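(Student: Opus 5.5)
The plan is a triangle-inequality argument in total variation, followed by the monotonicity of $\eta$ and a short limiting remark. Take two points $x,x'$ in the same fibre of $z\circ H$, so $z(H(x))=z(H(x'))=:\zeta$. Then the decoder laws coincide: $Q^{\pi,k}(\cdot\mid\zeta)$ is a single measure for each $\pi,k$. For every $\pi\in\Pi$ and $k\in K$, the triangle inequality for $\TV$ gives
\begin{equation*}
\TV\bigl(P^{\pi,k}(\cdot\mid x),P^{\pi,k}(\cdot\mid x')\bigr)
\le \TV\bigl(P^{\pi,k}(\cdot\mid x),Q^{\pi,k}(\cdot\mid\zeta)\bigr)
+\TV\bigl(Q^{\pi,k}(\cdot\mid\zeta),P^{\pi,k}(\cdot\mid x')\bigr)\le 2\epsilon .
\end{equation*}
Taking the supremum over $(\pi,k)$ shows that any two points of a fibre are at most $2\epsilon$ apart in the sup-$\TV$ pseudometric appearing in \eqref{eq:separation-modulus}.

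The second step converts this into a bound on $d(x,x')$. First note that $\eta$ is nondecreasing in $\tau$, because the constraint set $\{d(x,x')\ge\tau\}$ shrinks as $\tau$ grows. Now suppose $d(x,x')>\tau_\epsilon$. By the definition of $\tau_\epsilon$ as an infimum, there is some $\tau$ with $\tau_\epsilon\le\tau<d(x,x')$ and $\eta(\tau)>2\epsilon$. The pair $(x,x')$ is admissible in the infimum defining $\eta(\tau)$, so
\[
\sup_{\pi,k}\TV\bigl(P^{\pi,k}(\cdot\mid x),P^{\pi,k}(\cdot\mid x')\bigr)\ge\eta(\tau)>2\epsilon.
\]
This contradicts the first step. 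Hence $d(x,x')\le\tau_\epsilon$ for all $x,x'$ in a common fibre, and so each fibre has $d$-diameter at most $\tau_\epsilon$. If the set defining $\tau_\epsilon$ is empty, we set $\tau_\epsilon=\inf\emptyset=+\infty$ (or interpret it as $\operatorname{diam}X$), and the claim is vacuous.

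The step needing the most care is this inf/sup bookkeeping. $\eta$ need not be continuous, so we cannot assume $\eta(\tau_\epsilon)>2\epsilon$ at the endpoint itself. The argument above avoids this by using a $\tau$ strictly below $d(x,x')$, which only yields the non-strict bound $\operatorname{diam}\le\tau_\epsilon$; that is exactly what is claimed. A second point to flag is measurability of the supremum over $(\pi,k)$. This is harmless because $\Pi\times K$ is countable, and in any case the argument is pointwise and needs no measurability.

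For the limiting statement, suppose $\eta(\tau)>0$ for every $\tau>0$. Given $\tau>0$, any $\epsilon<\eta(\tau)/2$ gives $\tau_\epsilon\le\tau$ by monotonicity of $\eta$, so $\tau_\epsilon\to0$ as $\epsilon\to0$. At $\epsilon=0$ every fibre therefore has diameter $0$, i.e.\ $z\circ H$ is injective, which recovers the injectivity step in the proof of Theorem~3.7.

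It remains to relate the hypothesis $\eta(\tau)>0$ for all $\tau>0$ to E4. The condition $\eta(\tau)>0$ for all $\tau$ clearly implies E4. Conversely, E4 implies it on compact $X$ provided $x\mapsto P^{\pi,k}(\cdot\mid x)$ is continuous in $\TV$. Then the sup-$\TV$ pseudometric is lower semicontinuous and strictly positive on the compact set $\{d(x,x')\ge\tau\}$, so it attains a positive minimum. I would state this continuity caveat explicitly rather than claim the equivalence unconditionally.
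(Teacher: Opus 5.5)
Your argument is correct and is essentially the paper's proof: the triangle inequality for $\TV$ through the common decoder law $Q^{\pi,k}(\cdot\mid z(H(x)))$ bounds every fibre pair by $2\epsilon$, and the definition of $\eta$ then excludes $d(x,x')\ge\tau$ for any $\tau$ with $\eta(\tau)>2\epsilon$. This exclusion is exactly your infimum bookkeeping, which gives $d(x,x')\le\tau_\epsilon$. Your remark that positivity of $\eta$ on $(0,\infty)$ implies E4, but the converse needs extra regularity (e.g.\ continuity of $x\mapsto P^{\pi,k}(\cdot\mid x)$, so that the lower semicontinuous sup-$\TV$ attains a positive minimum on the compact set $\{d(x,x')\ge\tau\}$), is a valid sharpening of the paper's looser phrase ``which is E4''.
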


\begin{proof}
If $z(H(x))=z(H(x'))$ then for all $\pi$ and $k$ the triangle inequality for total
variation gives $\TV(P^{\pi,k}(\cdot\mid x),P^{\pi,k}(\cdot\mid x'))\le2\epsilon$.
Were $d(x,x')\ge\tau$ with $\eta(\tau)>2\epsilon$, the definition of $\eta$ would
force the left-hand side to exceed $2\epsilon$, a contradiction.
\end{proof}

Proposition~3.9 converts the predictive dimension of Section~3.5 into a resolution
statement: an encoder that predicts within $\epsilon$ resolves space to within
$\tau_\epsilon$, and the function $\epsilon\mapsto\tau_\epsilon$ is the operating
characteristic of the observational system. It is estimable, because $\eta$ can be
lower-bounded empirically by executing pairs of policies from nearby states.

\begin{rem}{3.10}{relation to interventional causal representation learning, and a sanity check}
Theorem~3.7 is the topological counterpart of the identifiability results proved for
structural causal models under interventions
\citep{brehmer2022weakly,ahuja2023interventional,vonkugelgen2023nonparametric,lippe2022citris,squires2023linear,vonkugelgen2021self},
which recover latent causal variables up to permutation and elementwise
reparameterisation. Here the residual ambiguity is expressed intrinsically as the
centraliser of the intervention group; it specialises to permutation-and-scaling when
the action is generated by independent single-coordinate interventions. The present
statement is weaker in one respect that should be stated plainly: equivariance of the
learned encoder is assumed in E-form rather than derived, and in practice it is
enforced by the penalty \eqref{eq:eeq} and verified as a residual, not guaranteed.
The corresponding sanity check is instructive. If the intervention family is removed
--- $G$ trivial and $\Pi$ a singleton --- then E4 fails for every pair of states not
already separated by a single observation, the centraliser becomes all of
$\Homeo(X)$, and Theorem~3.7 asserts nothing. This is exactly the impossibility of
unsupervised disentanglement without inductive bias \citep{locatello2019challenging}
and the non-identifiability of nonlinear ICA without auxiliary structure
\citep{hyvarinen2019nonlinear,khemakhem2020variational}. Partial recoveries are
possible from multiple sufficiently distinct views or from spatial dependence
\citep{gresele2020incomplete,halva2024identifiable}, but each such result buys
identifiability by importing exactly the kind of auxiliary structure that an
intervention family supplies here. The theorem therefore does
not merely tolerate interventions; it is empty without them, which is the strongest
form in which the paper's central thesis can be stated.
\end{rem}

Two caveats remain. Compactness in E1 excludes unbounded latent spaces and is used
only to convert continuous injectivity into homeomorphism; it can be replaced by
properness of $\varphi$. Joint separation in E2 is a strong assumption in any real
deployment, and Section~\ref{sec:limitations} treats its failure as the principal
identifiability risk rather than as an edge case.

\subsection{Distance as action cost}

For an arrow $u\colon x\to x'$, let $A(u)\ge0$ be its cost. Composition is executable
succession. Define
\begin{equation}
c_A(x,x') \;=\; \inf\{\,A(u)\ :\ u\colon x\to x'\ \text{in }G\,\},
\qquad \inf\emptyset := +\infty.
\label{eq:action-cost}
\end{equation}

\begin{prop}{3.4}{directed action geometry}
If $A(\mathrm{id}_x)=0$ and $A(v\circ u)\le A(u)+A(v)$, then $c_A$ is an extended
directed quasi-metric: $c_A(x,x)=0$ and $c_A(x,z)\le c_A(x,y)+c_A(y,z)$. It becomes a
metric only if finite mutual reachability, definiteness and reversal symmetry are
added.
\end{prop}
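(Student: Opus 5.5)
The plan is to verify the two quasi-metric axioms directly from the definition \eqref{eq:action-cost}, using the groupoid structure of $G$ and the two hypotheses on $A$. Then I would address the final sentence, which is a necessity claim, by exhibiting which metric axiom each missing hypothesis supplies.

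\emph{Reflexivity.} For each object $x$, the identity arrow $\mathrm{id}_x\colon x\to x$ belongs to $G$. It is therefore admissible in the infimum defining $c_A(x,x)$, and so $c_A(x,x)\le A(\mathrm{id}_x)=0$. Since $A\ge0$, every term in the infimum is nonnegative, which gives $c_A(x,x)=0$.

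\emph{Directed triangle inequality.} If $c_A(x,y)=+\infty$ or $c_A(y,z)=+\infty$, the inequality holds trivially in the extended reals, with the convention $a+\infty=+\infty$. Otherwise, fix $\delta>0$ and choose arrows $u\colon x\to y$ and $v\colon y\to z$ with $A(u)\le c_A(x,y)+\delta$ and $A(v)\le c_A(y,z)+\delta$. Because $t(u)=s(v)$, the composite $v\circ u\colon x\to z$ exists in $G$; this is the only place executable succession enters. Subadditivity then gives
\[
c_A(x,z)\le A(v\circ u)\le A(u)+A(v)\le c_A(x,y)+c_A(y,z)+2\delta .
\]
Letting $\delta\to0$ yields the claim. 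Nonnegativity of $c_A$ is immediate from $A\ge0$, so $c_A$ is an extended directed quasi-metric.

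\emph{The ``only if'' clause.} Here I would show that each added hypothesis is exactly a missing metric axiom:
\begin{itemize}
\item Finite mutual reachability is what makes $c_A$ finite-valued.
\item Definiteness, namely $c_A(x,x')=0\Rightarrow x=x'$, is not implied by the hypotheses, since cheap non-identity arrows may exist.
\item Symmetry fails in general, because the inverse $u^{-1}$ exists in the groupoid but $A(u^{-1})$ need not equal $A(u)$.
\end{itemize}
For the last point, a two-object example with $A(u)=1$ and $A(u^{-1})=2$ witnesses $c_A(x,x')\ne c_A(x',x)$. Adding all three hypotheses gives a genuine metric, and dropping any one of them fails some metric axiom.

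The only mild obstacle is the infimum bookkeeping: the infimum may not be attained, and there is the $+\infty$ case. Both are handled by the $\delta$-approximation and the extended-real convention above.
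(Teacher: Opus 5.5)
Your proof is correct, and it is the routine verification the paper leaves implicit: Proposition~3.4 is stated without proof. The identity arrow gives $c_A(x,x)=0$, and composability of $v\circ u$ with subadditivity, a $\delta$-approximation and the $+\infty$ convention gives the directed triangle inequality; reading definiteness and symmetry as conditions on $c_A$ itself, with the two-object groupoid $A(u)=1$, $A(u^{-1})=2$ as the asymmetry witness, is the right treatment of the final clause.
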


This correction matters in cities. Walking uphill and downhill, travelling with and
against congestion, gaining and losing institutional access, and buying and selling
in illiquid markets are generally asymmetric. Calling every such cost a metric
conceals the very geometry one wishes to study. Finsler and directed network
geometries are often the appropriate intermediate objects \citep{bao2000introduction}.

\subsection{Predictive dimension}

Dimension is not taken as the definition of space. It quantifies the complexity of a
chosen predictive representation. Without regularity restrictions a real number can
measurably encode pathological amounts of information, so the encoder class must be
fixed. Let $\mathcal{Z}_{d,L}$ be an admissible class of $d$-dimensional
$L$-Lipschitz encoders, let $D$ be a divergence, and let $Q$ denote a decoder family.
\begin{equation}
\Dpred(\epsilon;\Pi,K,L)
=\inf\Bigl\{ d \ :\ \exists\, z\in\mathcal{Z}_{d,L},\, Q,\ \
\sup_{\pi,k}\ \Ex\, D\bigl(P^{\pi,k}(\cdot\mid \Hist)\,\big\|\,Q^{\pi,k}(\cdot\mid z(\Hist))\bigr)\le\epsilon \Bigr\}.
\label{eq:dpred}
\end{equation}
Equation~\eqref{eq:dpred} makes dimension task-, scale-, error- and model-relative.
Topological, Hausdorff, spectral, information and predictive dimensions may disagree
because they answer different questions. Their comparison is informative; their
identification is not.
\section{A sheaf-valued geometry of the city}

\subsection{Stratified base and fibres}

Let $B$ be a finite regular cell complex or a Whitney-stratified space representing
rooms, air volumes, walls, fa\c{c}ades, streets, rails, pipes, junctions and sensors
\citep{pflaum2001analytic,edelsbrunner2010computational}. Let $\pi\colon E\to B\times\Reals$
be a bundle-like projection. The fibre over a cell and time is not one homogeneous
coordinate vector but a typed product of local geometric, physical, mobility, social
and economic state spaces. Different strata can have different dimensions. Singular
junctions are not defects to be smoothed away; they are where transfer, impedance and
governance conditions are imposed.

A cellular sheaf $\Shf$ assigns a stalk $\Shf(\sigma)$ to every cell $\sigma$ and a
restriction map $\rho_{\sigma\le\tau}\colon\Shf(\sigma)\to\Shf(\tau)$ to every
incidence relation, with functorial compatibility. A section is a choice of local
states. It is global when all incident choices agree after restriction. This is
precisely the right logic for combining sensors, models and institutions whose local
descriptions overlap but are not numerically identical
\citep{robinson2017sheaves,hansen2019spectral,bodnar2022neural}; a systematic
account of cellular sheaves and cosheaves is given by \citet{curry2014sheaves}.

\begin{figure}[htbp]
\centering
\includegraphics[width=\textwidth]{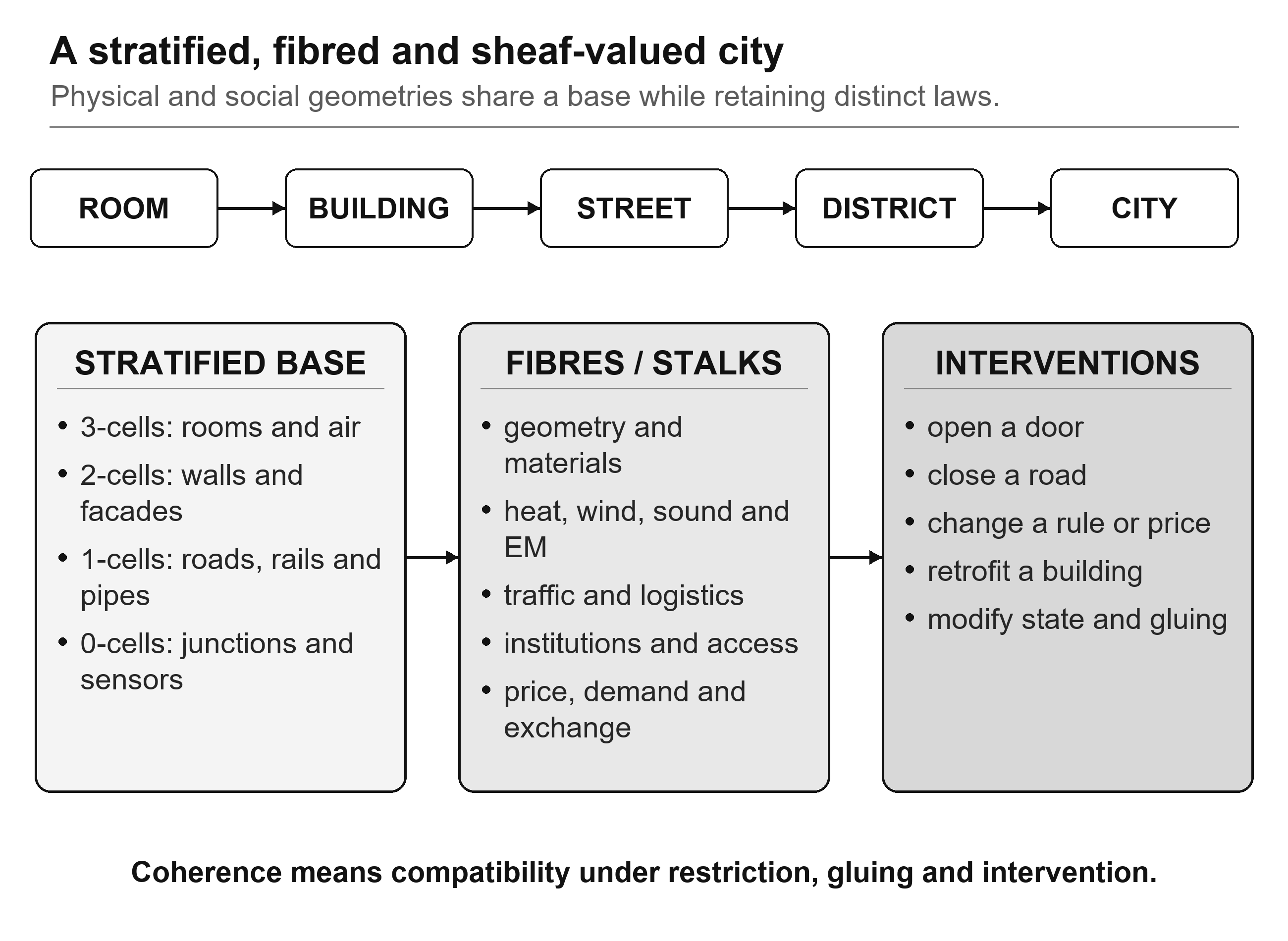}
\caption{A city is represented by a stratified base, typed local stalks, restriction
maps, cross-scale coarsening and interventions that may change both state and
gluing.}
\label{fig:city}
\end{figure}

\subsection{Descent energy, the sheaf Laplacian and holonomy}

For a finite cellular sheaf with inner products on stalks, let $C^0(B;\Shf)$ be the
space of $0$-cochains and $\delta_\Shf$ the sheaf coboundary. Let $W$ be positive
definite on incidence residuals.
\begin{equation}
L_\Shf = \delta_\Shf^{*}\, W\, \delta_\Shf,
\qquad
\Edesc(z) = \bigl\|W^{1/2}\delta_\Shf z\bigr\|^2 = \langle z, L_\Shf z\rangle .
\label{eq:descent-energy}
\end{equation}

\begin{prop}{4.1}{consistency certificate}
$\Edesc(z)=0$ if and only if $z$ is a global section. Equivalently,
$\ker L_\Shf=\ker\delta_\Shf$.
\end{prop}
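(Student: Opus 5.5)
The plan is a direct linear-algebra argument in finite dimensions, carried out in three steps. Throughout, $C^0(B;\Shf)$ is the finite-dimensional inner-product space $\bigoplus_\sigma \Shf(\sigma)$, and $C^1(B;\Shf)$ is the corresponding sum over incidences $\sigma\le\tau$. The coboundary is $(\delta_\Shf z)_{\sigma\le\tau} = \rho_{\sigma\le\tau} z_\sigma - z_\tau$, with the orientation or sign convention fixed once. With this convention, $\delta_\Shf z$ records exactly the incidence residuals.

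\emph{Step 1: identify $\ker\delta_\Shf$ with global sections.} This follows by unwinding the definitions. A $0$-cochain $z$ is a global section precisely when every incident pair agrees after restriction, i.e.\ $\rho_{\sigma\le\tau} z_\sigma = z_\tau$ for all $\sigma\le\tau$. That is the statement that every component of $\delta_\Shf z$ vanishes. (If the paper's cellular convention compares restrictions of both faces into a common coface, the same argument applies verbatim with that formula for $\delta_\Shf$.)

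\emph{Step 2: the energy identity and positivity.} By definition of the adjoint,
\[
\langle z, L_\Shf z\rangle = \langle \delta_\Shf z, W\delta_\Shf z\rangle = \|W^{1/2}\delta_\Shf z\|^2 ,
\]
which confirms the second equality in \eqref{eq:descent-energy}. Here $W^{1/2}$ is the unique positive definite square root of $W$. Since $W$ is positive definite, $W^{1/2}$ is injective. Hence $\Edesc(z)=0$ iff $W^{1/2}\delta_\Shf z=0$ iff $\delta_\Shf z=0$. Combined with Step 1, this gives the first assertion.

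\emph{Step 3: $\ker L_\Shf=\ker\delta_\Shf$.} The inclusion $\ker\delta_\Shf\subseteq\ker L_\Shf$ is immediate from $L_\Shf=\delta_\Shf^*W\delta_\Shf$. Conversely, suppose $L_\Shf z=0$. Then $\Edesc(z)=\langle z,L_\Shf z\rangle=0$, so $\delta_\Shf z=0$ by Step 2. This argument uses only that $L_\Shf$ is positive semidefinite with quadratic form $\Edesc$. No spectral decomposition is needed.

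The only point requiring care, and the main potential obstacle, is the convention in Step 1. The statement is true only if the coboundary is defined so that its vanishing is exactly the paper's gluing condition. I would therefore fix $\delta_\Shf$ explicitly at the outset and check that the weights $W$ act on the same incidence-indexed space, so that $W$ being positive definite on incidence residuals is precisely what the injectivity of $W^{1/2}$ in Step 2 requires. Everything else is finite-dimensional and routine.
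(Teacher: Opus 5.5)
Your proposal is correct and follows the paper's proof: positive definiteness of $W$ turns $\Edesc(z)=0$ into $\delta_\Shf z=0$, the kernel of the coboundary is identified with the global sections, and the identity $\langle z,L_\Shf z\rangle=\langle\delta_\Shf z,W\delta_\Shf z\rangle$ gives $\ker L_\Shf=\ker\delta_\Shf$. Your Step~3 simply writes out the semidefiniteness argument for $\ker L_\Shf\subseteq\ker\delta_\Shf$, which the paper leaves implicit in that identity.
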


\begin{proof}
Positive definiteness of $W$ gives $\Edesc(z)=0$ exactly when $\delta_\Shf z=0$. The
kernel of the coboundary is the space of global compatible sections, and
$\langle z,L_\Shf z\rangle=\langle \delta_\Shf z, W\delta_\Shf z\rangle$.
\end{proof}

The scalar graph Laplacian is recovered when every stalk is one-dimensional and every
restriction is the identity. Nontrivial restriction maps allow neighbouring cells to
translate between coordinate frames, sensor types, institutional categories or learned
feature bases. Neural sheaf diffusion exploits precisely this additional geometry
\citep{bodnar2022neural}.

Proposition~4.1 as stated is an identity rather than a discovery, and on its own it
would not justify the sheaf machinery: the same certificate for a graph Laplacian with
identity restrictions is equally immediate. What distinguishes a sheaf from a
multilayer network is that non-identity restriction maps carry \emph{holonomy}, and
holonomy is an obstruction that no identity-restriction model can represent.

\begin{prop}{4.2}{holonomy obstruction}
Let $\Shf$ be a cellular sheaf on a connected graph $B$ with stalks $\Reals^n$ and
restriction maps in $O(n)$, so that
$(\delta_\Shf z)_e = \rho_{j,e}z_j-\rho_{i,e}z_i$ for each edge $e=(i,j)$. Fix a base
vertex $v_0$ and let $\Hol(\Shf,v_0)\subseteq O(n)$ be the group generated by the
holonomies of all cycles based at $v_0$. Then
\begin{equation}
\dim\ker L_\Shf \;=\; \dim\Fix\bigl(\Hol(\Shf,v_0)\bigr),
\label{eq:holonomy}
\end{equation}
the dimension of the subspace of $\Reals^n$ fixed pointwise by the holonomy group. In
particular, with identity restrictions the holonomy is trivial and
$\dim\ker L_\Shf = n$ per connected component, which is the graph Laplacian case;
whereas a single cycle with nontrivial holonomy can force $\dim\ker L_\Shf = 0$, so
that the sheaf admits no nonzero global section even though the graph is connected and
every local datum is individually consistent.
\end{prop}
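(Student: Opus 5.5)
By Proposition~4.1, $\ker L_\Shf=\ker\delta_\Shf$, so it suffices to compute the space of global sections. A global section is a $0$-cochain $z$ with $\rho_{j,e}z_j=\rho_{i,e}z_i$ for every edge $e=(i,j)$. Since the restriction maps lie in $O(n)$, they are invertible. The edge condition can therefore be rewritten as a transport rule
\[
z_j = \rho_{j,e}^{-1}\rho_{i,e}\,z_i =: P_{e}\,z_i,
\]
with $P_{e^{-1}}=P_e^{-1}$ for traversal in the opposite direction. For an edge path $\gamma=(e_1,\dots,e_r)$, the transport $P_\gamma$ is the ordered product of the $P_{e_l}$. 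The holonomy of a cycle based at $v_0$ is then $P_\gamma\in O(n)$, and $\Hol(\Shf,v_0)$ is the group these generate.

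The key step is to show that evaluation at the base vertex, $z\mapsto z_{v_0}$, is a linear isomorphism from $\ker\delta_\Shf$ onto $\Fix(\Hol(\Shf,v_0))$.

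\emph{Well-defined and into the fixed space.} If $z$ is a global section, induction along any path from $v_0$ gives $z_v=P_\gamma z_{v_0}$. Applying this to a cycle at $v_0$ gives $P_\gamma z_{v_0}=z_{v_0}$, so $z_{v_0}$ is fixed by every generator and hence by the whole group.

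\emph{Injectivity.} Because $B$ is connected, every vertex is reached by some path from $v_0$. So $z_{v_0}=0$ forces $z_v=P_\gamma z_{v_0}=0$ for every vertex $v$.

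\emph{Surjectivity.} Given $w\in\Fix(\Hol(\Shf,v_0))$, choose a spanning tree $T$ of $B$ rooted at $v_0$ and set $z_v:=P_{\gamma_v}w$, where $\gamma_v$ is the tree path from $v_0$ to $v$. Tree edges satisfy the edge condition by construction. For a non-tree edge $e=(i,j)$, the loop $\gamma_j^{-1}e\gamma_i$ is a cycle at $v_0$, so its transport fixes $w$; unwinding the products gives $P_e z_i=z_j$. Hence $z$ is a global section. Together these three points yield~\eqref{eq:holonomy}.

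This surjectivity step is the main technical point: keeping the composition order and the orientation conventions consistent in the loop identity. I would handle it by writing every path as a word in oriented edges and checking that $P$ is a homomorphism from the path groupoid to $O(n)$. Orthogonality of the restriction maps is used only for invertibility; note that $\rho_{j,e}^{-1}\rho_{i,e}\in O(n)$.

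For the two consequences:
\begin{itemize}
\item With identity restrictions every $P_e$ is the identity, so the holonomy group is trivial and $\Fix=\Reals^n$. Applying this on each connected component gives $n$ per component.
\item For the vanishing example, take a cycle graph with one edge carrying a rotation $R\in SO(2)$ by an angle that is not a multiple of $2\pi$, with $n=2$. Then $\Hol$ is generated by $R$. Since $R$ has no eigenvalue $1$, $\Fix=0$ and $\dim\ker L_\Shf=0$. Meanwhile each edge constraint alone is solvable for any prescribed value at one endpoint, which is the sense in which every local datum is individually consistent.
\end{itemize}
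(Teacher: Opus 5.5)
Your proposal is correct and takes the same approach as the paper. Both arguments use Proposition~4.1 to reduce to $\ker\delta_\Shf$, and both show that evaluation at $v_0$ is a linear isomorphism from parallel sections onto $\Fix(\Hol(\Shf,v_0))$; your spanning-tree construction spells out the extension step that the paper states in one line, and your rotation-by-$\theta\notin 2\pi\mathbb{Z}$ example matches the paper's eight-cycle instance.
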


\begin{proof}
A $0$-cochain lies in $\ker\delta_\Shf$ precisely when it is parallel, that is
$z_j = \rho_{j,e}^{-1}\rho_{i,e}z_i$ along every edge. Parallel transport along a path
is then determined by the restriction maps, so a parallel section is determined by its
value at $v_0$, and such a value extends consistently if and only if it is fixed by the
holonomy of every cycle at $v_0$. The assignment $z\mapsto z(v_0)$ is therefore a
linear isomorphism from $\ker\delta_\Shf$ onto $\Fix(\Hol(\Shf,v_0))$, and
$\ker L_\Shf=\ker\delta_\Shf$ by Proposition~4.1.
\end{proof}

This is the sheaf-theoretic form of the connection Laplacian of vector diffusion maps
and of the graph connection Laplacian
\citep{singer2012vector,bandeira2013cheeger}, and it is what gives Proposition~4.1
empirical content. A numerical instance is reported in
Section~\ref{sec:experiments}: on an eight-cycle with $\Reals^2$ stalks and rotation
restrictions, the graph Laplacian has a two-dimensional kernel regardless of the data,
the sheaf with trivial holonomy also has a two-dimensional kernel, and the sheaf with
holonomy angle $0.70$ has kernel dimension zero with spectral gap
$7.7\times10^{-3}$. The same experiment exhibits the converse failure mode: a field
that is genuinely compatible with the correct sheaf receives descent energy
$1.0\times10^{-16}$, while the identity-restriction graph model assigns it energy
$1.31$ and so reports a spurious inconsistency.

The urban reading is direct. Suppose a vector-valued layer --- wind velocity, acoustic
intensity gradient, pedestrian flow --- is recorded in local frames aligned to street
orientation, as is normal practice. The restriction maps across an incidence are the
rotations between adjacent street orientations, and a closed circuit of streets whose
orientations do not compose to the identity has nontrivial holonomy. A model that
averages raw local vectors, which is what a graph Laplacian does, will then report
systematic residuals at exactly those circuits, and will do so whether or not the
physical field is compatible. The sheaf model makes a falsifiable prediction here: the
residual of the identity-restriction model at a circuit should scale with the holonomy
angle of that circuit, and should vanish for the sheaf model.
Section~\ref{sec:experimental-programme} lists this as Experiment~F.

\begin{prop}{4.3}{restriction maps are identifiable, and the constraint removes an inconsistency}
Suppose paired local states $\{(z_i^{(m)},z_j^{(m)})\}_{m=1}^{M}$ are observed across
an incidence with $z_j = \rho z_i + \text{noise}$ and $\rho\in O(n)$. Then the
orthogonal Procrustes estimator $\hat\rho = UV^{\mathsf{T}}$, where
$\sum_m z_j^{(m)}(z_i^{(m)})^{\mathsf{T}} = U\Sigma V^{\mathsf{T}}$, is consistent.
Moreover, when $z_i$ is itself measured with independent error of variance $\sigma^2$
--- the errors-in-variables regime that is the norm in sensor networks --- the
unconstrained least-squares estimator is \emph{inconsistent}, converging to the
attenuated map $\lambda(\lambda+\sigma^2)^{-1}\rho$ with $\lambda$ the per-component
signal variance, whereas the projection onto $O(n)$ discards precisely this
multiplicative attenuation and remains consistent.
\end{prop}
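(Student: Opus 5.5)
The argument rests on the law of large numbers applied to normalised cross-covariance matrices, followed by continuity of the relevant matrix maps. I will assume that the pairs $(z_i^{(m)},\text{noise}^{(m)})$ are i.i.d. with $\Ex z_i=0$ and $\Ex z_i z_i^{\mathsf{T}} = \lambda I_n$ (isotropic signal, matching the phrase ``per-component signal variance''). I will also assume the noise is zero-mean, has finite second moments, and is independent of $z_i$.

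\textbf{Procrustes consistency.}
\begin{enumerate}
\item Write $C_M = M^{-1}\sum_m z_j^{(m)}(z_i^{(m)})^{\mathsf{T}}$. Rescaling by $M^{-1}$ leaves the polar factor $UV^{\mathsf{T}}$ unchanged.
\item Substituting $z_j=\rho z_i+\xi$ and applying the strong law gives $C_M\to \rho\,\Ex[z_iz_i^{\mathsf{T}}] + \Ex[\xi z_i^{\mathsf{T}}] = \lambda\rho$ almost surely.
\item The polar factor $A\mapsto UV^{\mathsf{T}}$ is well defined and continuous on the open set of invertible matrices, since there it equals $A(A^{\mathsf{T}}A)^{-1/2}$. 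Because $\lambda\rho$ is invertible, the continuous mapping theorem gives $\hat\rho\to \mathrm{polar}(\lambda\rho)=\rho$.
\end{enumerate}

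\textbf{Errors in variables.} Now suppose we observe $\tilde z_i = z_i+e$, with $e$ independent of $(z_i,\xi)$ and $\Ex ee^{\mathsf{T}}=\sigma^2 I$.
\begin{enumerate}
\item The least-squares estimator is $\hat\rho_{\mathrm{LS}} = \bigl(\sum z_j\tilde z_i^{\mathsf{T}}\bigr)\bigl(\sum \tilde z_i\tilde z_i^{\mathsf{T}}\bigr)^{-1}$.
\item By the strong law, the numerator over $M$ tends to $\lambda\rho$, because the cross terms with $e$ and $\xi$ vanish in expectation.
\item The denominator over $M$ tends to $(\lambda+\sigma^2)I$.
\item Continuity of inversion then gives $\hat\rho_{\mathrm{LS}}\to\lambda(\lambda+\sigma^2)^{-1}\rho$. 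This limit is not orthogonal whenever $\sigma>0$, so least squares is inconsistent.
\item For Procrustes, the cross-covariance limit is the same $\lambda\rho$, because the measurement error $e$ enters the numerator only through mean-zero cross terms. Hence $\hat\rho\to\mathrm{polar}(\lambda\rho)=\rho$.
\end{enumerate}
The key observation is that attenuation is multiplication by a positive scalar, and $\mathrm{polar}(cA)=\mathrm{polar}(A)$ for every $c>0$. Projecting onto $O(n)$ therefore removes exactly this factor.

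\textbf{Main obstacle: the covariance assumption.} The delicate step is the assumption on the signal covariance. If $\Sigma=\Ex z_iz_i^{\mathsf{T}}$ is not isotropic, the cross-covariance limit is $\rho\Sigma$, whose polar factor is still $\rho$: indeed $\rho\Sigma$ with $\Sigma\succ0$ is already in polar form. So Procrustes consistency survives under the weaker hypothesis $\Sigma\succ 0$. Least squares, however, converges to $\rho\Sigma(\Sigma+\sigma^2I)^{-1}$, which is no longer a scalar attenuation. I will therefore prove the Procrustes claim under $\Sigma\succ0$. I will state the explicit scalar attenuation formula under isotropy, which is how I read ``per-component signal variance''. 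Invertibility of $\Sigma$ is what keeps the polar map continuous at the limit; if $\Sigma$ is singular, $\rho$ is identified only on its range, and I would flag this as the genuine identifiability condition.
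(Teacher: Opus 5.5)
Your proposal is correct and follows essentially the paper's argument: the strong law on the normalised cross-moment matrix, continuity of the polar factor $A\mapsto A(A^{\mathsf{T}}A)^{-1/2}$ at a nonsingular limit, and the fact that the least-squares limit $\lambda(\lambda+\sigma^2)^{-1}\rho$ is a positive multiple of $\rho$ and so has polar factor $\rho$. Your extension to anisotropic $\Sigma\succ0$ is also valid, and it reaches slightly further than you claim: $\Sigma(\Sigma+\sigma^2 I)^{-1}$ is itself symmetric positive definite, so even the $O(n)$ projection of the least-squares limit still returns $\rho$ in that case.
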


\begin{proof}
Consistency of the Procrustes estimator follows from the strong law applied to the
cross-moment matrix together with continuity of the polar factor at a nonsingular
limit. For the second claim, the least-squares limit is
$(\Sigma_{xx}+\sigma^2 I)^{-1}\Sigma_{xy}=\lambda(\lambda+\sigma^2)^{-1}\rho$ for
isotropic signal, which is a positive scalar multiple of $\rho$ and therefore has the
same polar factor; the $O(n)$ projection of the attenuated map is $\rho$ exactly.
\end{proof}

The practical consequence is stronger than regularisation. The equivariance and
descent constraints are usually presented as priors that trade bias for variance, and
would then be expected to help only in the small-sample regime. Proposition~4.3 says
something else: under measurement error in the inputs, the unconstrained estimator has
an asymptotic bias floor that no amount of data removes, and the structural constraint
removes it. The numerical check in Section~\ref{sec:experiments} confirms this, with
the unconstrained error plateauing at the predicted floor $1.9\times10^{-4}$ at
$\sigma=0.1$ and $2.7\times10^{-3}$ at $\sigma=0.2$ while the constrained error
continues to decrease with sample size.

\subsection{Coupled field and flow dynamics}

Let $z_t$ be a sheaf-valued urban state and $u_t$ an intervention. A deliberately
general evolution law is
\begin{equation}
M(z_t)\,\dot z_t \;=\; -L_\Shf(z_t)\,z_t + N(z_t) + B(z_t)u_t + s_t + \xi_t .
\label{eq:dynamics}
\end{equation}
The block operator $L_\Shf$ carries diffusion and compatibility terms; $N$ contains
nonlinear transport, reaction, behavioural response and market dynamics; $B$ specifies
actuators; $s$ and $\xi$ are forcing and uncertainty. Physical conservation laws may
occupy selected blocks. Social and economic layers need not be forced into mass
conservation. Their locality is instead encoded by stalks, restrictions, reachability
and intervention response.
\begin{equation}
\mathrm{do}(u)\ \colon\ (\Shf,\ \rho,\ \Phi)\ \longmapsto\ (\Shf_u,\ \rho_u,\ \Phi_u).
\label{eq:do}
\end{equation}
Equation~\eqref{eq:do} allows an intervention to change not only the current state but
the coupling architecture. Opening a door alters acoustic, thermal and aerodynamic
transfer maps. Closing a road alters mobility restrictions. A zoning or pricing rule
may alter economic accessibility without moving any wall. This is where causal
intervention and sheaf gluing meet.

\subsection{Cross-scale compatibility}

Let $R_{\ell\to m}$ coarsen a state from level $\ell$ to $m$. Exact commutation is
exceptional; a useful model measures the defect.
\begin{equation}
\Escale(\ell,m;u) \;=\; \Ex_z\bigl\| R_{\ell\to m}\,\Phi^\ell_u(z)
\;-\; \Phi^m_u\bigl(R_{\ell\to m} z\bigr) \bigr\|^2 .
\label{eq:escale}
\end{equation}
Small residual means that acting and then aggregating nearly agrees with aggregating
and then acting. Large residual identifies a scale at which omitted heterogeneity
matters. The city is therefore not assumed to possess one privileged resolution. It is
an inverse or multiresolution system whose transition maps are themselves testable.

\section{Audit of the core formulae}

The following audit separates statements that are identities, statements that require
hypotheses and statements that should be estimated as residuals. This prevents
suggestive notation from being mistaken for a theorem.

\begingroup
\small
\setlength{\tabcolsep}{4pt}
\begin{longtable}{L{2.15cm} L{1.95cm} L{4.15cm} L{4.25cm}}
\caption{Type and falsifiability audit of the principal formulae.}
\label{tab:audit}\\
\toprule
\textbf{Object} & \textbf{Mathematical status} & \textbf{Conditions} & \textbf{Failure mode / test}\\
\midrule
\endfirsthead
\toprule
\textbf{Object} & \textbf{Mathematical status} & \textbf{Conditions} & \textbf{Failure mode / test}\\
\midrule
\endhead
\bottomrule
\endfoot
Predictive state \eqref{eq:predictive-state} & Well-typed product of probability laws
& Standard Borel histories and futures; regular conditional laws exist
& Uncountable policy classes need extra measurable structure; check countability of $\Pi\times K$.\\
\addlinespace
Predictive quotient \eqref{eq:predictive-equivalence} & Minimal sufficient statistic
& Every required future law is indexed in $S$; null sets treated modulo a prior
& A restricted policy family may merge states that another intervention separates.\\
\addlinespace
Joint embedding (Lemma 3.6) & Conditional theorem
& Compact Hausdorff latent space; continuous maps into Hausdorff targets; joint separation
& Unknown nonlinear encoders remain statistically non-identifiable without further structure.\\
\addlinespace
Equivariance \eqref{eq:eeq} & Loss / defect, not assumed equality
& Both compositions have the same source and target; modality metrics declared
& Coordinate mismatch or misaligned action timestamps increase the residual.\\
\addlinespace
Interventional identification (Lemma 3.5) & Conditional theorem
& Sequential ignorability (I2) and positivity (I3); admissible arrows have behaviour probability bounded below
& Unobserved confounding or zero-probability interventions make the quotient statistical, not causal; test by policy-support diagnostics and negative controls.\\
\addlinespace
Centraliser identifiability (Theorem 3.7) & Conditional theorem
& Compact connected latent space; joint separation; equivariant learned encoder; interventional faithfulness (E4)
& Removing interventions makes the centraliser all of $\Homeo(X)$ and the statement vacuous, recovering \citep{locatello2019challenging}; test E4 by the separation modulus of Proposition~3.9.\\
\addlinespace
Action cost \eqref{eq:action-cost} & Directed extended quasi-metric
& Identity has zero cost; composition is subadditive
& Without reversal symmetry it is not a metric; test asymmetry explicitly.\\
\addlinespace
Predictive dimension \eqref{eq:dpred} & Model-relative complexity index
& Encoder regularity, policy set, horizon, divergence and tolerance all declared
& Without regularity, pathological encodings make dimensional claims vacuous.\\
\addlinespace
Sheaf energy \eqref{eq:descent-energy} & Exact consistency certificate
& Finite cellular sheaf; positive-definite $W$; correct restriction maps
& A wrong sheaf can report false consistency; validate restrictions first.\\
\addlinespace
Holonomy obstruction \eqref{eq:holonomy} & Exact theorem
& Cellular sheaf on a connected graph with $O(n)$ restriction maps
& Misspecified restriction maps produce spurious or missing obstructions; validate by Procrustes recovery (Proposition~4.3) before interpreting the kernel.\\
\addlinespace
Cross-scale law \eqref{eq:escale} & Empirical residual
& Coarsening map and norm specified; comparable interventions at both levels
& High-frequency or singular effects can make aggregation and action fail to commute.\\
\addlinespace
Projective limit (Theorem 8.4) & Exact theorem; saturation clause is empirical
& Directed context poset; countable cofinal chain for the Borel clause
& Saturation (iv) may fail: a curve that never flattens means no finite instrument set determines the space. Report $\Esat$ and separate noise-driven from refinement-driven decrease.\\
\end{longtable}
\endgroup

\subsection{Reproducible synthetic study under noise}
\label{sec:experiments}

To check the algebra without disguising a toy model as urban evidence, consider a
latent state $(x,y)$ on the two-torus. Vision observes $v(x)=(\cos x,\sin x)$,
audition observes $a(y)=(\cos y,\sin y)$, and touch observes
$t(x,y)=(\cos(x+y),\sin(x+y))$; a fourth modality $w(x,y)=(\cos(x-y),\sin(x-y))$ is
held in reserve for the refinement test. Each view is partial; the
visual--auditory pair separates points. Six translation actions are sampled. For every
action the observation transformation is a planar rotation, so exact equivariance is
known:
\begin{equation}
v(x+\Delta x)=R_{\Delta x}\,v(x),\qquad
a(y+\Delta y)=R_{\Delta y}\,a(y),\qquad
t(x+\Delta x,\,y+\Delta y)=R_{\Delta x+\Delta y}\,t(x,y).
\label{eq:torus-actions}
\end{equation}

The earlier version of this test was run noiselessly, which made every reported figure
either machine precision or an artefact of the fitting protocol, and left an inversion
in the ablation ordering that we now resolve. All experiments below therefore add
isotropic Gaussian observation noise of standard deviation $\sigma$ to every modality
at both the input and the target, and every entry is reported as the mean and standard
deviation over twenty seeds derived from the base seed \texttt{20260731}. Training uses
$12{,}000$ states and testing $5{,}000$ held-out states per seed. Linear least squares
is fitted separately for each action, and joint features include the bilinear
visual--auditory products needed to reconstruct touch.

Six checks are reported. (a)~Cross-modal prediction under the five representations of
Table~\ref{tab:results}. (b)~The value of the equivariance constraint, comparing an
unconstrained least-squares operator with its projection onto $O(2)$, across sample
sizes and noise levels. (c)~The holonomy obstruction of Proposition~4.2 on an
eight-cycle sheaf. (d)~Recovery of restriction maps by Procrustes under noise, per
Proposition~4.3. (e)~The cross-scale residual \eqref{eq:escale} as a function of the
wavenumber of the underlying field. (f)~The saturation of the projective system of
Theorem~8.4, obtained by refining the observational context from $\{v\}$ to $\{v,a\}$
to $\{v,a,t\}$ to $\{v,a,t,w\}$.

\begin{table}[htbp]
\centering
\small
\caption{Held-out prediction error in the synthetic cross-modal experiment, with
observation noise $\sigma=0.05$, mean $\pm$ standard deviation over 20 seeds.}
\label{tab:results}
\begin{tabularx}{\textwidth}{L{3.3cm} l X}
\toprule
\textbf{Representation} & \textbf{Held-out MSE} & \textbf{Interpretation}\\
\midrule
Visual only $+$ action & $0.3373\pm0.0003$ & Cannot recover the unobserved auditory phase.\\
Auditory only $+$ action & $0.3371\pm0.0004$ & Cannot recover the unobserved visual phase.\\
Joint, action-blind & $0.0817\pm0.0005$ & Averages incompatible transition operators.\\
Joint, misaligned actions & $0.0820\pm0.0005$ & Indistinguishable from action-blind (Welch $t=-1.94$); both marginalise the action set.\\
Joint $+$ correct action & $0.00584\pm0.00005$ & Approaches the irreducible noise floor $\sigma^2=0.0025$.\\
\addlinespace
Joint $+$ correct action \newline (noiseless control) & $1.2\times10^{-30}\pm9\times10^{-31}$ & Recovers the deterministic law to numerical precision.\\
\bottomrule
\end{tabularx}
\end{table}

\begin{figure}[htbp]
\centering
\includegraphics[width=\textwidth]{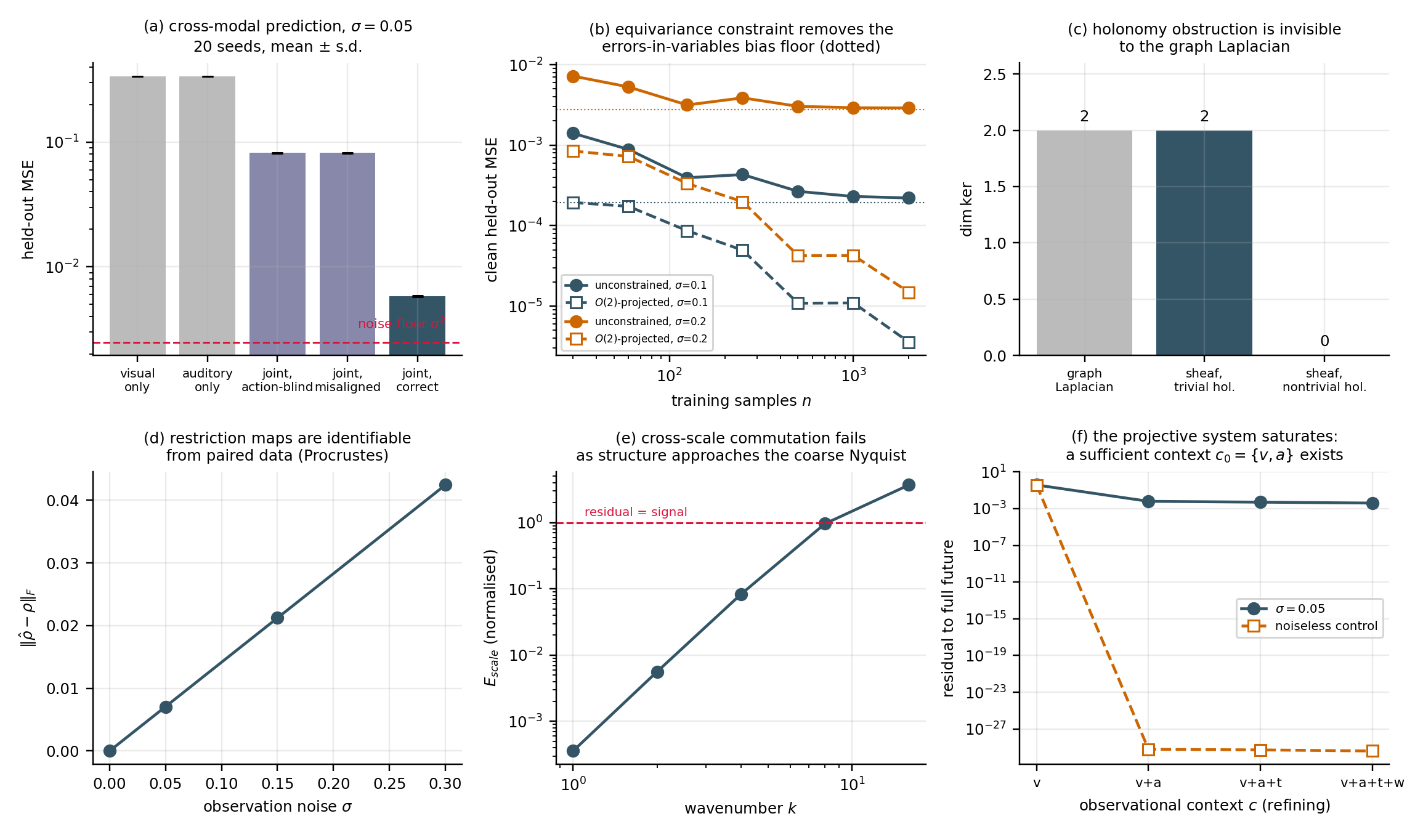}
\caption{Numerical checks under noise, twenty seeds. (a)~Exact recovery requires both
joint observation and correctly aligned actions; the action-blind and misaligned
conditions are statistically indistinguishable. (b)~The $O(2)$ constraint removes the
errors-in-variables bias floor of the unconstrained estimator (dotted lines, predicted
analytically), so the benefit persists asymptotically rather than vanishing with
sample size. (c)~A holonomy obstruction reduces the sheaf kernel to zero while the
graph Laplacian kernel is insensitive to it. (d)~Restriction maps are recoverable from
paired data with error linear in the noise. (e)~Cross-scale commutation degrades as
the field approaches the coarse Nyquist wavenumber. (f)~The projective system of
predictive quotients saturates at $c_0=\{v,a\}$; the further decrease under noise
reflects variance reduction, not refinement, as the noiseless control shows.}
\label{fig:numerics}
\end{figure}

Two results deserve comment because they correct the earlier presentation. First, in
Table~\ref{tab:results} the action-blind and misaligned-action conditions are
statistically indistinguishable, with a Welch statistic of $-1.94$ over twenty seeds.
This is the expected outcome and not a defect: both conditions marginalise over the
action set, one by omitting it and one by destroying its correspondence with the
observations, so both estimate the same average operator. The earlier report of a small
advantage for the misaligned condition was seed noise on a single run. The informative
contrast is between either of these and the correctly aligned condition, which is
fourteen times better and approaches the irreducible noise floor
$\sigma^2=2.5\times10^{-3}$.

Second, the refinement test in panel~(f) supports the saturation clause of
Theorem~8.4 but only in the noiseless control, and the distinction matters. Under noise
the residual falls by $3.7\times10^{-1}$ from $\{v\}$ to $\{v,a\}$ and then by roughly
$1\times10^{-3}$ at each further refinement; those later reductions are consistent with
variance reduction from redundant noisy views rather than with genuine refinement of
the quotient. In the noiseless control the residual falls to below $10^{-29}$ at
$\{v,a\}$ and does not fall further, which is the stationarity condition of
Theorem~8.4(iv) with sufficient context $c_0=\{v,a\}$. The methodological lesson is
that a saturation curve measured under noise cannot by itself certify sufficiency, and
an empirical protocol must separate the two effects by holding predictive capacity
fixed while adding modalities.

The result establishes internal consistency only. It does not show that a city admits
the chosen sheaf, that its policies are known, or that a robot will learn the correct
latent state. Those claims require field and embodied experiments designed to break the
model.

\section{From embodied to em-spaced intelligence}

\subsection{Two meanings of ESI}

Two recent uses of the initials ESI must be distinguished. \emph{Embodied Spatial
Intelligence} places the agent inside a perception-action loop and asks it to acquire
evidence through movement. ESI-Bench makes this requirement explicit and reports that
active exploration outperforms passive observation, while poor action selection causes
cascading perceptual failure \citep{hong2026esibench}. \emph{Em-Spaced Intelligence}
places intelligence partly in the spatial environment itself; the term is formed by
analogy with \emph{embodied}, intelligence being em-bedded in and partly constituted by
the instrumented space, and it is unrelated to the typographic em space. In that
programme, Artificially Evolved Spatial Organisms combine multimodal foundation models,
graph neural networks and non-Euclidean manifold geometry, and a Dynamic Space Protocol
mediates the co-evolution of robots and urban space \citep{yang2026genesis}. The
overlap with embodied spatial intelligence is substantive, but the ontological emphasis
differs, and the framework of this paper is neutral between them: both are instances of
Definition~6.1 with different allocations of sensing and actuation between the two
bodies.

\begin{defn}{6.1}{em-spaced intelligent system}
An em-spaced intelligent system is a coupled pair $(A,E)$ of a mobile agent $A$ and a
persistent spatial body $E$, together with a shared predictive state $S$, bidirectional
observation maps, agent actions $\pi$, environmental interventions $u$, and governance
constraints $\Gamma$. Both $A$ and $E$ can alter the future field of admissible
interaction.
\end{defn}

\begin{figure}[htbp]
\centering
\includegraphics[width=\textwidth]{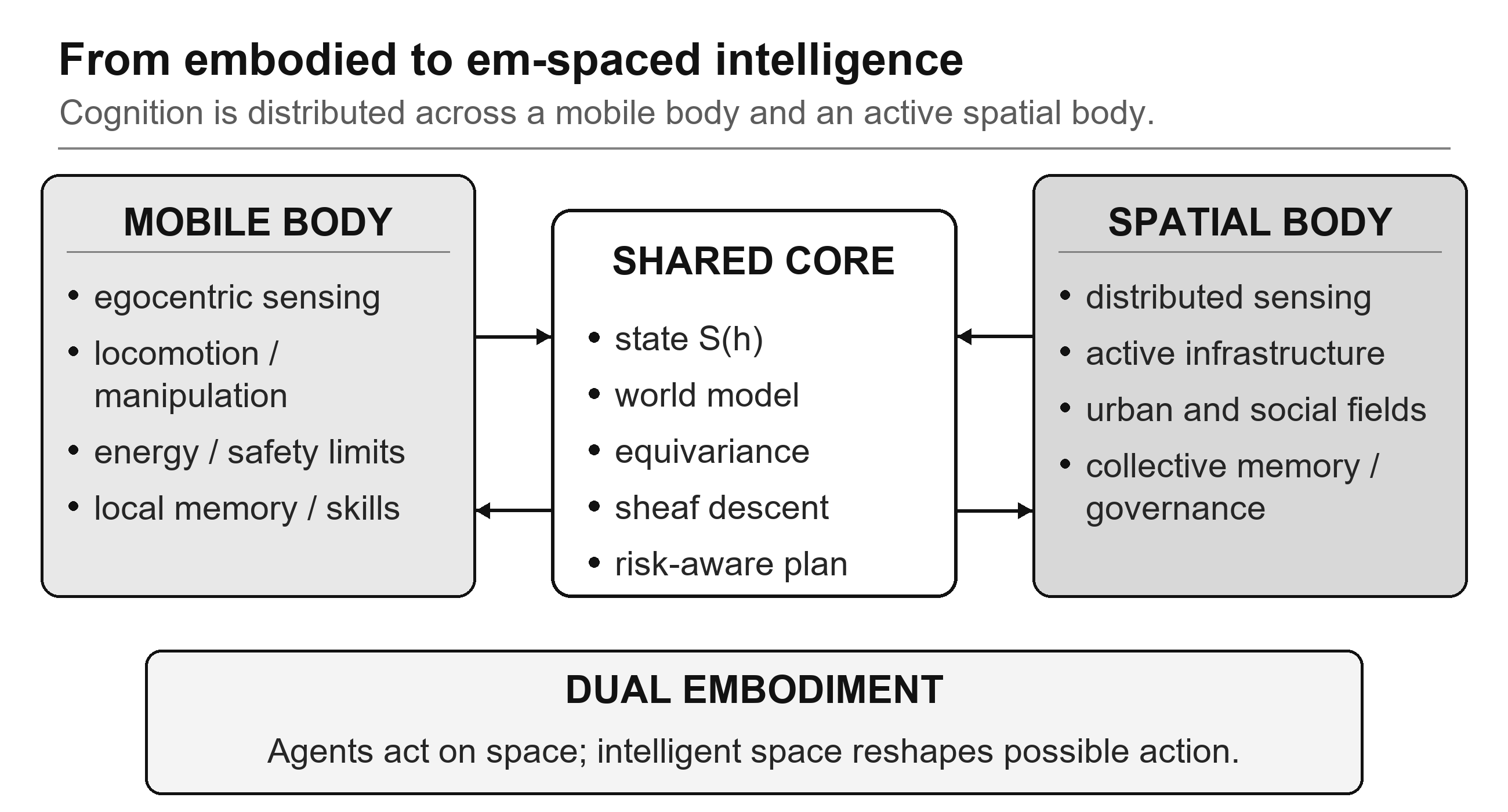}
\caption{Dual embodiment. The mobile body and the spatial body share a predictive core
but retain distinct sensors, actuators, memories and constraints.}
\label{fig:dual-embodiment}
\end{figure}

\subsection{Learning and control objective}
\label{sec:learning}

A trainable model can combine prediction with structural penalties rather than treating
geometry as a purely visual latent variable.
\begin{equation}
\mathcal{L} \;=\; \mathcal{L}_{\mathrm{pred}}
+ \lambda_{\mathrm{eq}}\Eeq
+ \lambda_{\mathrm{desc}}\Edesc
+ \lambda_{\mathrm{scale}}\Escale
+ \lambda_{\mathrm{cf}}\mathcal{L}_{\mathrm{cf}}
+ \lambda_{\mathrm{safe}}C_\Gamma .
\label{eq:loss}
\end{equation}
The first term scores future multimodal prediction. $\Eeq$ tests action
covariance, in the sense made systematic by the geometric deep learning programme
\citep{bronstein2021geometric};
$\Edesc$ tests local-to-global compatibility; $\Escale$ tests aggregation;
$\mathcal{L}_{\mathrm{cf}}$ scores counterfactual interventions; $C_\Gamma$ penalises
unsafe or institutionally inadmissible acts. A coupled planner then chooses both mobile
and environmental actions:
\begin{equation}
(\pi^{*},u^{*}) \;=\; \arg\min_{\pi,u}\ \Ex\Bigl[\ \sum_{\tau} c(S_\tau,\pi_\tau,u_\tau)
\ \Big|\ S_t \Bigr]
\quad\text{subject to}\quad (\pi_\tau,u_\tau)\in\Gamma(S_\tau).
\label{eq:planner}
\end{equation}
This is more than a robot using a smart building as a sensor. The building can change
illumination, ventilation, access, signage, signalling and information disclosure; the
robot can move, inspect and manipulate. Each intervention changes what the other can
learn and do. The object of intelligence is their coupled law.

\subsection{Experimental programme}
\label{sec:experimental-programme}

\paragraph{Experiment A --- cross-modal intervention prediction.} Instrument a building
with vision, acoustics, temperature, airflow, occupancy and door state. Hold out
complete intervention types, not random frames. Compare the full model with modality
concatenation, action-blind prediction and single-layer graph baselines.

\paragraph{Experiment B --- stratified transfer.} Train at room and building scales,
then test at floor, block and district scales. Report $\Escale$, held-out likelihood,
missing-sensor reconstruction and the stability of recovered topological features.

\paragraph{Experiment C --- dual embodiment.} Compare robot-only control,
environment-only automation and joint $(\pi,u)$ control on navigation, inspection,
emergency egress and human-robot coexistence. Report task success, energy, intervention
regret, safety violations and calibration under distribution shift.

\paragraph{Experiment D --- social and economic geometry.} Use road closure, timetable,
price or access-rule changes as explicit interventions. Test whether the inferred
reachability geometry predicts mobility and exchange beyond Euclidean distance while
remaining stable under alternative demographic or institutional partitions.

\paragraph{Experiment E --- context saturation.} Instrument a building with an ordered
family of modalities and add them one at a time, holding predictive capacity and
training budget fixed. Report the saturation residual $\Esat(c,c')$ of Corollary~8.6 as
a curve. The claim that the building has a well-defined spatial description relative to
this instrument set is the claim that the curve saturates; the modality at which it
stops falling identifies a sufficient context, and any later fall exhibits a modality
carrying genuinely new spatial information. As Section~\ref{sec:experiments} shows, the
noise-driven and refinement-driven components of the curve must be separated before
sufficiency can be asserted.

\paragraph{Experiment F --- holonomy.} Identify circuits in the street network whose
local frames do not compose to the identity, and record a vector-valued layer in local
frames along them. Proposition~4.2 predicts that a graph model with identity
restrictions shows residuals scaling with the circuit holonomy angle while the sheaf
model with Procrustes-estimated restrictions does not. This is the cleanest available
discriminator between the sheaf formalism and a multilayer network, because it concerns
a structural obstruction rather than a difference in fitting capacity.

\medskip\noindent
Spatial-intelligence benchmarks already reveal weaknesses in orientation, view selection
and interaction-aware 3D representation
\citep{hong2026esibench,wang2025site,zhu2025spa}. The proposed experiments add two
demands that current benchmarks rarely combine: compatibility across urban scales and
interventions performed by the environment itself.

\section{Euclidean, non-Euclidean and directional geometry}

Euclidean and non-Euclidean descriptions need not be rival ontologies. A building
survey may use Euclidean coordinates while acoustic travel time, wind transport, social
access and economic exchange define distinct effective geometries on the same
stratified base. Smoothly varying positive-definite tensors interpolate among
Riemannian geometries; Finsler costs express directional asymmetry; graphs and sheaves
handle discontinuities and singular junctions; hyperbolic embeddings accommodate
hierarchical reachability at low distortion \citep{nickel2017poincare}; metric-measure
and optimal-transport structures compare distributions and evolving mass
\citep{gromov2007metric,bao2000introduction,villani2009optimal,battiston2020networks,connes1994noncommutative}.
The configurational tradition in urban morphology has long made the same methodological
point in a different vocabulary, treating accessibility rather than metric distance as
the primary spatial variable
\citep{hillier1984social,hillier1996space,hillier2005art,batty2013new,yang2019value},
and recent work in that tradition has begun to couple it to generative and
manifold-based models of urban form
\citep{yang2022artificially,yang2025generative}. Continuity of description is
possible where the structure varies continuously, but topology-changing interventions
and strata transitions may be genuinely discontinuous.

Directional geometry deserves a brief note, stated conservatively. Some urban layers are
governed by directional rather than isotropic propagation: acoustic, electromagnetic and
radiative transfer concentrate along thin tubes, and the natural obstruction theory for
such layers is harmonic-analytic. Fefferman's ball-multiplier counterexample shows that
directional decompositions cannot be summed naively \citep{fefferman1971multiplier}, and
the recent resolution of the Kakeya set conjecture in three dimensions sharpens the
control of unions of tubes \citep{guth2026streamlined,wang2025volume}. We record the
connection as a layer-specific prior and nothing more. Kakeya estimates supply
inequalities and obstruction mechanisms; they do not determine a sensing geometry, do
not by themselves fix a minimum number of tomographic angles or beams, and do not supply
a reconstruction algorithm. Converting them into a stability guarantee requires a
specified forward operator, noise model and discretisation, which we do not develop here
and flag as future work rather than as a result of this paper.

\section{What, then, is space?}

Space is best defined neither as dimension alone nor as an arbitrary relation. Dimension
states how many independent degrees of freedom a chosen representation needs. Relation
states which distinctions and transitions matter. The latter is logically prior: without
adjacency, reachability, compatibility and admissible transformation, a dimension is a
number without spatial content. Yet not every relation is spatial. A relation becomes
spatial when it is localisable, composable, testable by intervention, stable enough to
support counterfactual prediction and compatible across overlapping probes.

\begin{defn}{8.1}{space, provisional form}
For a specified family of probes, modalities, actions and scales, space is the minimal
local-to-global relational object that makes the intervention-conditioned
transformations of those modalities jointly representable and predictively sufficient.
\end{defn}

Definition~8.1 invites an obvious objection, and the objection must be met rather than
deflected. If space is defined relative to a specified family of probes, modalities and
actions, then changing the family changes the space, and the word \emph{invariant} in
the title of this paper is doing no work: what has been described is an
observer-dependent artefact. The answer is that the family-indexed constructions are not
independent of one another. They are functorially related, and the object that
Definition~8.1 names is properly the limit of the whole system rather than any one of
its terms.

\subsection{Observational contexts and refinement}

\begin{defn}{8.2}{observational context}
An observational context is a tuple $c=(B_c,A_c,\Pi_c,K_c)$ consisting of a subfamily of
probes, a subset of modalities, a countable class of admissible policies and a set of
horizons. Write $S_c$ for the predictive state \eqref{eq:predictive-state} formed from
the coordinates indexed by $\Pi_c\times K_c$ using the modalities in $A_c$ at the probes
in $B_c$, and $\sim_c$ for the induced equivalence on histories. Order contexts by
componentwise inclusion: $c\preccurlyeq c'$ when $B_c\subseteq B_{c'}$,
$A_c\subseteq A_{c'}$, $\Pi_c\subseteq\Pi_{c'}$ and $K_c\subseteq K_{c'}$. The poset
$(\Ctx,\preccurlyeq)$ is directed, since the componentwise union of two contexts is
again a context and countability is preserved by finite unions.
\end{defn}

\begin{lem}{8.3}{refinement}
If $c\preccurlyeq c'$ then ${\sim_{c'}}\subseteq{\sim_c}$, and there is a unique
surjection $p_{c'c}\colon S_{c'}(\Hist)\to S_c(\Hist)$ with $S_c = p_{c'c}\circ S_{c'}$.
These maps satisfy $p_{cc}=\mathrm{id}$ and $p_{c'c}\circ p_{c''c'} = p_{c''c}$ whenever
$c\preccurlyeq c'\preccurlyeq c''$.
\end{lem}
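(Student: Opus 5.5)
The key observation is that when $c\preccurlyeq c'$, the predictive state $S_c(h)$ is literally a coordinate projection of $S_{c'}(h)$. Every coordinate of $S_c$ is a conditional law $P^{\pi,k}$ with $\pi\in\Pi_c\subseteq\Pi_{c'}$ and $k\in K_c\subseteq K_{c'}$, computed on observations from modalities $A_c\subseteq A_{c'}$ at probes $B_c\subseteq B_{c'}$. Such a law is either itself a coordinate of $S_{c'}(h)$, or it is the image of a coordinate of $S_{c'}(h)$ under the measurable marginalisation map $\Prob(Y_{c'}^{1:k})\to\Prob(Y_c^{1:k})$ induced by projecting the richer observation space onto the sub-family of modalities and probes. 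So I will first define a map
\[
\tilde p_{c'c}\colon \prod_{(\pi,k)\in\Pi_{c'}\times K_{c'}}\Prob\bigl(Y_{c'}^{1:k}\bigr)\longrightarrow \prod_{(\pi,k)\in\Pi_c\times K_c}\Prob\bigl(Y_c^{1:k}\bigr).
\]
It discards the coordinates indexed outside $\Pi_c\times K_c$ and pushes each retained law forward along the marginalisation. I will then check that $S_c=\tilde p_{c'c}\circ S_{c'}$ pointwise on $\Hist$. This is the one step needing care. It relies on the fact that the marginal of the regular conditional law of the joint future observations is a version of the regular conditional law of the marginal future, which I will take to be the convention built into Definition~8.2.

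Given the factorisation, the rest is formal. If $h\sim_{c'}h'$ then $S_{c'}(h)=S_{c'}(h')$, so $S_c(h)=S_c(h')$, which gives ${\sim_{c'}}\subseteq{\sim_c}$. Set $p_{c'c}:=\tilde p_{c'c}|_{S_{c'}(\Hist)}$. Its image is $\tilde p_{c'c}(S_{c'}(\Hist))=S_c(\Hist)$, so it is a surjection onto $S_c(\Hist)$. For uniqueness, suppose $q$ is any map with $S_c=q\circ S_{c'}$. Every point of $S_{c'}(\Hist)$ has the form $S_{c'}(h)$, and there $q$ must take the value $S_c(h)$. Hence $q=p_{c'c}$, and this also shows $p_{c'c}$ is well defined independently of the chosen lift $h$.

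The functoriality identities then follow from uniqueness rather than from computation. Since $\mathrm{id}\circ S_c=S_c$, uniqueness gives $p_{cc}=\mathrm{id}$. For $c\preccurlyeq c'\preccurlyeq c''$, we have $p_{c'c}\circ p_{c''c'}\circ S_{c''}=p_{c'c}\circ S_{c'}=S_c$. Uniqueness of the map $q$ with $S_c=q\circ S_{c''}$ on $S_{c''}(\Hist)$ then forces $p_{c'c}\circ p_{c''c'}=p_{c''c}$. If Borel measurability of $p_{c'c}$ is wanted, it comes from the measurability of coordinate projections and of pushforward on $\Prob(\cdot)$ for standard Borel spaces.

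The main obstacle is the factorisation step, specifically its dependence on versions of regular conditional laws. Marginals of a regular conditional law agree with the smaller-context conditional law only almost surely, not for every $h$. I would handle this in one of two ways. One option is to fix, once and for all, a single version of the joint conditional law for the maximal data and define every $S_c$ as its marginal, so that the identity holds everywhere. The other is to read the statement modulo null sets, in the sense of Theorem~3.2(iv). I will adopt the first convention, since it makes $p_{c'c}$ a genuine map of sets, as the lemma requires.
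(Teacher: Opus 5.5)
Your proposal is correct and follows the paper's proof. In both, $S_c$ is obtained from $S_{c'}$ by a projection, which gives ${\sim_{c'}}\subseteq{\sim_c}$ and the well-definedness of $p_{c'c}$ on the image; uniqueness holds because $S_{c'}$ maps onto $S_{c'}(\Hist)$; and the cocycle identities follow, by uniqueness in your version and from the identities for coordinate projections in the paper's. Your extra care is a real refinement: when $A_c\subsetneq A_{c'}$ or $B_c\subsetneq B_{c'}$, the coordinates of $S_c$ are marginals rather than literal coordinates of $S_{c'}$, so the paper's ``subset of the coordinates'' phrasing tacitly assumes exactly the consistent choice of versions that you make explicit.
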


\begin{proof}
The coordinates of $S_c$ form a subset of the coordinates of $S_{c'}$, so $S_c$ is the
corresponding coordinate projection of $S_{c'}$; in particular $S_{c'}(h)=S_{c'}(h')$
implies $S_c(h)=S_c(h')$, which is both the inclusion of equivalence relations and the
well-definedness of $p_{c'c}$ on the image. Uniqueness holds because $p_{c'c}$ is
determined on $S_{c'}(\Hist)$, which is the whole of its domain, and the cocycle
identities are the corresponding identities for coordinate projections.
\end{proof}

\subsection{Space as a projective limit}

\begin{thm}{8.4}{space as a projective limit}
The pairs $(\{S_c(\Hist)\},\{p_{c'c}\})$ form a projective system over the directed
poset $(\Ctx,\preccurlyeq)$. Let
\begin{equation}
S_\infty \;:=\; \varprojlim_{c\in\Ctx} S_c(\Hist)
\;=\; \Bigl\{ (s_c)_c \in \prod_c S_c(\Hist)\ :\
p_{c'c}(s_{c'}) = s_c \ \text{ whenever } c\preccurlyeq c' \Bigr\},
\label{eq:projective-limit}
\end{equation}
and let $S_{\mathrm{can}}(h) := (S_c(h))_c$. Then:
\begin{enumerate}[label=(\roman*),leftmargin=2.2em]
\item $S_{\mathrm{can}}$ takes values in $S_\infty$ and $S_c = \mathrm{pr}_c\circ S_{\mathrm{can}}$ for every $c$;
\item $\sigma(S_{\mathrm{can}}) = \bigvee_c \sigma(S_c)$, so $S_{\mathrm{can}}$ is the
minimal statistic sufficient for all contexts simultaneously;
\item each $S_c$ is an interventional invariant relative to $c$, whereas
$S_{\mathrm{can}}$ is invariant under change of context;
\item if there is a context $c_0$ such that $p_{cc_0}$ is injective for every
$c\succcurlyeq c_0$, then $S_\infty\cong S_{c_0}$, and we call $c_0$ a
\emph{sufficient context};
\item if $\Ctx$ admits a countable cofinal chain
$c_1\preccurlyeq c_2\preccurlyeq\cdots$ with each $S_{c_n}(\Hist)$ standard Borel and
each connecting map Borel, then $S_\infty$ is a Borel subset of a Polish product and is
therefore standard Borel.
\end{enumerate}
\end{thm}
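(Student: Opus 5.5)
The projective-system claim comes straight from Lemma~8.3. The maps $p_{cc}=\mathrm{id}$ and $p_{c'c}\circ p_{c''c'}=p_{c''c}$ are exactly the axioms of an inverse system over the directed poset $(\Ctx,\preccurlyeq)$ of Definition~8.2. I would then take each clause in turn, using nothing beyond Lemma~8.3 and Theorem~3.2 for (i)--(iv). The only genuinely descriptive-set-theoretic step is (v).

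For (i), fix $h$ and $c\preccurlyeq c'$. Lemma~8.3 gives $p_{c'c}(S_{c'}(h))=S_c(h)$, so $S_{\mathrm{can}}(h)$ satisfies the compatibility condition in \eqref{eq:projective-limit}. The identity $\mathrm{pr}_c\circ S_{\mathrm{can}}=S_c$ holds by definition.

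For (ii), I would use that the $\sigma$-algebra generated by a map into a product is generated by its coordinates. I would equip $S_\infty$ with the trace of the product $\sigma$-algebra, with each $S_c(\Hist)$ carrying the trace of the Borel structure as in Theorem~3.2. This gives $\sigma(S_{\mathrm{can}})=\bigvee_c\sigma(\mathrm{pr}_c\circ S_{\mathrm{can}})=\bigvee_c\sigma(S_c)$. For minimality, let $z$ be sufficient for every context. Theorem~3.2(ii), applied context by context, gives $\sigma(S_c)\subseteq\sigma(z)$ for each $c$, hence $\bigvee_c\sigma(S_c)\subseteq\sigma(z)$. Conversely, $S_{\mathrm{can}}$ is itself sufficient for each context by (i).

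For (iii), the content is formal. $S_c$ depends on $c$, but $S_{\mathrm{can}}$ and $S_\infty$ are indexed by the whole poset. A change of reference context $c\mapsto c'$ is absorbed by the connecting maps: for any cofinal subfamily $\Ctx'\subseteq\Ctx$, the canonical map $\varprojlim_{\Ctx}\to\varprojlim_{\Ctx'}$ is a bijection. I would prove this standard cofinality lemma. For $d\in\Ctx$, pick $c'\in\Ctx'$ with $d\preccurlyeq c'$ and define $s_d:=p_{c'd}(s_{c'})$. Independence of the choice follows from directedness together with the cocycle identity.

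For (iv), injectivity combined with surjectivity (Lemma~8.3) makes every $p_{cc_0}$ with $c\succcurlyeq c_0$ a bijection. The set $\{c:c\succcurlyeq c_0\}$ is cofinal because $\Ctx$ is directed. By the cofinality lemma, $S_\infty\cong\varprojlim_{c\succcurlyeq c_0}S_c(\Hist)$. A limit of bijections over a poset with least element $c_0$ is identified with $S_{c_0}$ via $\mathrm{pr}_{c_0}$. The inverse is $s\mapsto(p_{cc_0}^{-1}(s))_{c\succcurlyeq c_0}$, whose compatibility follows from the cocycle identity.

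For (v), cofinality again reduces $S_\infty$ to $\varprojlim_n S_{c_n}(\Hist)$. This is the set of sequences in $\prod_n S_{c_n}(\Hist)$ with $p_{c_{n+1}c_n}(s_{n+1})=s_n$ for all $n$. Each such condition is the equaliser of two Borel maps into a standard Borel space, and is therefore Borel because the diagonal of a standard Borel space is Borel. A countable intersection of these conditions is Borel in a countable product of standard Borel spaces, and that product can be given a Polish topology. So $S_\infty$ is standard Borel.

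I expect the main obstacle to be making ``$\cong$'' in (iv) and the identification in (v) measurable rather than merely set-theoretic. The cofinality bijection should be Borel in both directions, and in (v) this is routine because the chain is countable. In (iv) in full generality I would only claim a bijection compatible with all projections, and upgrade it to a Borel isomorphism under the hypothesis of Theorem~3.2(v) via Lusin--Souslin.
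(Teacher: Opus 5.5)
Your proposal is correct and follows the paper's proof essentially clause by clause. You use Lemma~8.3 for the projective system and (i), coordinate generation of the product $\sigma$-algebra together with Theorem~3.2(ii) applied context by context for (ii), and reduction to the countable chain with a countable intersection of preimages of diagonals for (v). The differences are presentational. You isolate the cofinality lemma and reuse it in (iii)--(v), whereas the paper invokes directedness inline (``every $c$ is dominated by some $c'\succcurlyeq c_0$''). You are also more explicit about measurability in (iv), where the paper claims only a bijection with inverse given by transport.
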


\begin{proof}
The system is projective by Lemma~8.3 and directed by Definition~8.2. (i)~The
compatibility conditions defining $S_\infty$ are exactly the identities
$S_c = p_{c'c}\circ S_{c'}$, which hold pointwise. (ii)~Each $S_c$ factors through
$S_{\mathrm{can}}$ by (i), so $\bigvee_c\sigma(S_c)\subseteq\sigma(S_{\mathrm{can}})$;
conversely $S_{\mathrm{can}}$ is measurable with respect to the product
$\sigma$-algebra generated by the coordinates $S_c$, giving the reverse inclusion.
Minimality then follows from Theorem~3.2(ii) applied coordinatewise. (iii)~Immediate
from (i) and the definition of the limit, which involves no choice of context. (iv)~If
$p_{cc_0}$ is injective for all $c\succcurlyeq c_0$, then a compatible family is
determined by its $c_0$-component, because every $c$ is dominated by some
$c'\succcurlyeq c_0$ by directedness; the map $S_\infty\to S_{c_0}$ is thus a bijection
with inverse given by transport along the system. (v)~A cofinal countable chain
determines the limit, which is then the set of compatible sequences in a countable
product of Polish spaces --- a Borel subset, since it is a countable intersection of
preimages of diagonals under Borel maps, and Borel subsets of standard Borel spaces are
standard Borel.
\end{proof}

\begin{defn}{8.5}{space, final form}
For a class $\Ctx$ of admissible observational contexts, space is the projective limit
$S_\infty = \varprojlim_{c\in\Ctx} S_c(\Hist)$ of the context-indexed predictive
quotients, together with the action groupoid $G$ and the descent data that make each
$S_c$ a compatible local description.
\end{defn}

\begin{rem}{8.5a}{the relativism objection answered}
Every individual $S_c$ is context-relative, exactly as every coordinate chart is
chart-relative. What Theorem~8.4(iii) supplies is that the family is functorial and the
limit is not context-relative; what varies between observers is which finite
approximation to $S_\infty$ their instruments give them access to, and that is a
statement about epistemic access rather than about the object. Clause~(v) says the limit
is a legitimate measurable space, so the definition is not merely formal. Clause~(iv)
says that in favourable cases the limit is attained at a finite stage, so a finitely
instrumented city can in principle carry a complete spatial description rather than an
endlessly improvable approximation. This is the sense in which the construction is an
invariant, and it is the sense the title intends.
\end{rem}

\begin{cor}{8.6}{saturation is falsifiable}
For $c\preccurlyeq c'$ define the saturation residual
\begin{equation}
\begin{split}
\Esat(c,c') \;:=\;\ & \sup_{\pi,k}\ \Ex\, D\bigl(P^{\pi,k}(\cdot\mid \Hist)\,\big\|\,Q^{\pi,k}(\cdot\mid S_c(\Hist))\bigr)\\[2pt]
&-\ \sup_{\pi,k}\ \Ex\, D\bigl(P^{\pi,k}(\cdot\mid \Hist)\,\big\|\,Q^{\pi,k}(\cdot\mid S_{c'}(\Hist))\bigr).
\end{split}
\label{eq:esat}
\end{equation}
The assertion that $c_0$ is a sufficient context predicts $\Esat(c_0,c')=0$ within
tolerance for every $c'\succcurlyeq c_0$. A nonzero residual at any $c'$ falsifies
sufficiency and exhibits a modality, probe or policy carrying spatial information not
present in $c_0$.
\end{cor}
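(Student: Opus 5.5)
The plan is to reduce Corollary~8.6 to Theorem~8.4(iv) and Lemma~8.3. The key point is that when $c_0$ is sufficient, $S_c$ and $S_{c'}$ generate the same $\sigma$-algebra on $\Hist$ for every $c'\succcurlyeq c_0$. The two suprema in \eqref{eq:esat} are then infima over the same class of decoders, interpreting $Q$ as the best decoder in an admissible family. So they agree exactly at the population level, and the empirical difference is a tolerance question.

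The key steps are these. First, fix $c'\succcurlyeq c_0$. By Lemma~8.3, $S_{c_0}=p_{c'c_0}\circ S_{c'}$. Sufficiency makes $p_{c'c_0}$ injective on $S_{c'}(\Hist)$, so it has an inverse $q$ on $S_{c_0}(\Hist)$ and $S_{c'}=q\circ S_{c_0}$. Under the standing hypothesis of Theorem~3.2(v), both images are standard Borel. Lusin--Souslin then makes $q$ Borel, so $\sigma(S_{c'})=\sigma(S_{c_0})$.

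Second, take any decoder $Q^{\pi,k}(\cdot\mid S_{c'})$. It is reproduced by $Q^{\pi,k}(\cdot\mid q(S_{c_0}))$, which is a decoder of $S_{c_0}$. Conversely, any decoder of $S_{c_0}$ is $Q\circ p_{c'c_0}$ applied to $S_{c'}$. Hence the attainable values of $\sup_{\pi,k}\Ex D(\cdot\|\cdot)$ coincide, the optimal values coincide, and $\Esat(c_0,c')=0$.

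Third, for the falsification clause, argue by contraposition. Suppose $\Esat(c_0,c')>0$ beyond tolerance. Then the decoder class on $S_{c'}$ is strictly better, so $\sigma(S_{c'})\not\subseteq\sigma(S_{c_0})$ modulo null sets. Equivalently, $p_{c'c_0}$ fails to be injective, so there exist $h,h'$ with $h\sim_{c_0}h'$ but $h\not\sim_{c'}h'$. By Definition~8.2, the separating coordinate of $S_{c'}$ lies in $\Pi_{c'}\times K_{c'}$ and uses some modality in $A_{c'}$ at some probe in $B_{c'}$. It cannot lie entirely in $c_0$, so it exhibits the promised new modality, probe or policy.

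The main obstacle is that $\Esat$ depends on a decoder family $Q$ and a divergence $D$ that the corollary leaves unspecified. The argument is exact only if $Q$ ranges over all Borel decoders, or at least over a class closed under precomposition with the Borel maps $q$ and $p_{c'c_0}$, and if the suprema are read as optimal values. I would state this closure condition explicitly as the reading of ``within tolerance''. With a capacity-limited $Q$, the residual can be nonzero for optimisation or variance reasons, which is the noise caveat of Section~\ref{sec:experiments}. A smaller point is that $D(P\|Q)=0$ should force $P=Q$, so that a strictly positive gap genuinely certifies a missing $\sigma$-algebra distinction, via Theorem~3.2(ii) and (iv).
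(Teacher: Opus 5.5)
Your proposal is correct and is the argument the paper intends but never writes out: the paper gives no separate proof of Corollary~8.6 and treats it as an immediate reading of Theorem~8.4(iv). Your chain is that reading made explicit: injectivity of $p_{c'c_0}$ forces $\sigma(S_{c'})=\sigma(S_{c_0})$, which gives identical decoder classes and hence $\Esat(c_0,c')=0$, and the falsification clause is the contrapositive. Your closure condition on $Q$ is precisely the hypothesis the paper leaves implicit in ``within tolerance''. Two small slips: ``equivalently'' in your third step should read ``hence'', because non-injectivity of $p_{c'c_0}$ (for instance on a null set) does not by itself give $\sigma(S_{c'})\not\subseteq\sigma(S_{c_0})$ modulo null sets; and the separating coordinate may also be a new horizon in $K_{c'}\setminus K_{c_0}$, not only a new modality, probe or policy.
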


In the synthetic system of Section~\ref{sec:experiments} the residual falls by
$3.7\times10^{-1}$ from $\{v\}$ to $\{v,a\}$ and thereafter, in the noiseless control,
to below $10^{-29}$ with no further decrease, so $c_0=\{v,a\}$ is a sufficient context
there. Section~\ref{sec:experimental-programme} states the corresponding urban protocol
as Experiment~E. It is worth being explicit that saturation is a substantive empirical
claim about a city and not a theorem about cities: a city whose saturation curve never
flattens would be one for which no finite instrument set determines a spatial
description, and nothing in this framework rules that out.

\subsection{Time}

Time enters through composition and irreversibility. If $\Phi_{s,t}$ maps states from
$s$ to $t$, temporal coherence requires
\begin{equation}
\Phi_{t,u}\circ\Phi_{s,t} = \Phi_{s,u},
\qquad
\Phi_{t,t} = \mathrm{id}.
\label{eq:time}
\end{equation}
This algebra supplies order and duration only when coupled to clocks, causal cones,
dissipation or action cost. Space and time are therefore not produced merely by
extending a list of dimensions. Space records the compatibility of possible
co-existence and transition; time records the composable ordering, rate and
irreversibility of change.

\section{Limitations and boundary conditions}
\label{sec:limitations}

\paragraph{Identifiability.} Theorem~3.7 recovers the latent space only up to the
centraliser of the intervention group, and only under joint separation, equivariance and
interventional faithfulness. Each hypothesis is a real risk in deployment. The policy
family must excite the relevant degrees of freedom, which by Lemma~3.5 is a positivity
condition on the data-generating regime rather than a property of the city; modalities
must jointly separate the states of interest, which no single instrument suite
guarantees; and equivariance of the learned encoder is enforced by penalty and verified
as a residual, not derived.

\paragraph{Confounding.} Assumption~I2 is the strongest substantive assumption in the
paper. In an instrumented building, interventions are often selected by facility
managers, schedules or occupants in response to conditions that the sensor suite does
not record, which is precisely unobserved confounding. Where it fails, the estimated
predictive state is a statistical rather than an interventional object, and
Corollary~3.8 does not apply.

\paragraph{Nonstationarity.} Urban institutions and populations change the observation
and intervention laws. A fixed sheaf can become obsolete; restriction maps and strata
may need online change-point tests. In the language of Section~8 the context class
itself drifts, so the projective system is indexed by a moving poset and the limit of
Theorem~8.4 need not exist.

\paragraph{Normativity.} Social and economic geometries contain power, exclusion and
value. Predictability does not make a policy legitimate. Governance constraints must be
model inputs, not conclusions deduced from geometric efficiency. This is sharpened by
Lemma~3.5: because only executed interventions enter the geometry, whoever controls
which interventions are performed thereby controls which spatial distinctions the model
is able to represent at all.

\paragraph{Computation.} Exact inference over rich sheaves, long horizons and joint
robot-environment interventions is generally intractable. Approximation changes the
effective predictive quotient and must be reported as part of the model.

\paragraph{Evidence.} The synthetic tests check algebraic coherence and the specific
claims of Propositions~4.2, 4.3 and Theorem~8.4(iv) in a system where the ground truth
is known. They supply no field validation. A publishable empirical claim requires
preregistered interventions, real multimodal data, ablations and external replication.

\section{Conclusion}

The central claim can be said plainly. Vision, hearing, touch and the many fields and
flows of a city do not disclose space because they look alike. They disclose it because
the same actions produce stable, mutually constraining and counterfactually testable
changes in them. Mathematical space supplies the formal object, physical space supplies
empirical law, and perceptual space supplies the minimal predictive quotient needed for
action. Their unity lies in an interventional invariant, not in the erasure of their
distinctions.

A stratified base, typed fibres and sheaf descent then provide a disciplined way to
place buildings, fields, mobility, institutions and exchange in one model. The same
construction clarifies em-spaced intelligence: an intelligent agent does not merely
occupy a passive environment; agent and spatial body jointly determine the future
possibilities of sensing and action. The mathematical programme is consequently
falsifiable. Its predictions fail when modalities do not align under intervention, local
models do not glue, scale maps do not commute within tolerance, or a simpler predictive
state performs equally well. Those failure conditions are not weaknesses of the
definition. They are what make it scientific.

\appendix

\section{Notation}

\begingroup
\small
\begin{longtable}{L{2.9cm} L{9.9cm}}
\caption{Principal notation.}\label{tab:notation}\\
\toprule
\textbf{Symbol} & \textbf{Meaning}\\
\midrule
\endfirsthead
\toprule
\textbf{Symbol} & \textbf{Meaning}\\
\midrule
\endhead
\bottomrule
\endfoot
$B$ & Site of probes or stratified urban base.\\
$\Shf$ & Sheaf of typed local states; $\Shf(U)$ is the state space over probe $U$.\\
$G$ & Action groupoid; arrows are executable movements or interventions.\\
$h_\alpha$ & Observation map for modality $\alpha$.\\
$\Phi_g$ & State transformation induced by action $g$.\\
$T^g_\alpha$ & Corresponding transformation in modality $\alpha$.\\
$\Hist$ & Space of multimodal histories.\\
$\Pi$, $K$ & Admissible policies and prediction horizons.\\
$S(h)$ & Canonical predictive state, a product of conditional future laws.\\
$\delta_\Shf$, $L_\Shf$ & Sheaf coboundary and weighted sheaf Laplacian.\\
$R_{\ell\to m}$ & Cross-scale restriction or coarsening map.\\
$c$, $\Ctx$ & Observational context and the directed poset of contexts.\\
$S_c$, $p_{c'c}$ & Context-relative predictive quotient and the refinement projections.\\
$S_\infty$ & Projective limit of the context-indexed quotients; space in the final sense.\\
$Z_{\Homeo(X)}(\Phi(G))$ & Centraliser of the intervention action; the residual ambiguity group.\\
$\Hol(\Shf,v_0)$ & Holonomy group of the cellular sheaf $\Shf$ at base vertex $v_0$.\\
$\eta(\tau)$, $\tau_\epsilon$ & Interventional separation modulus and the resolution it implies.\\
$b$, I1--I3 & Behaviour policy and the causal assumptions making $P$ interventional.\\
\end{longtable}
\endgroup

\section{Acceptance criteria for an empirical paper}

\paragraph{B1. Intervention coverage.} The training and test sets must state which
actions are observed, which are held out, and whether policy support is sufficient to
distinguish the claimed spatial states.

\paragraph{B2. Competing geometries.} Compare Euclidean coordinates, graph distance,
learned latent distance, action cost and the sheaf model. Report where each succeeds
rather than selecting one post hoc.

\paragraph{B3. Structural ablations.} Remove action conditioning, cross-modal alignment,
nontrivial restriction maps, cross-scale penalties and environmental actuation one at a
time.

\paragraph{B4. Calibration and shift.} Report conditional likelihood or proper scoring
rules, uncertainty calibration and performance under unseen buildings, populations,
weather and institutional regimes.

\paragraph{B5. Governance.} State who controls spatial interventions, whose costs enter
the objective and which safety or access constraints are inviolable.

\bibliographystyle{plainnat}
\bibliography{references}

\end{document}